\documentclass[jornal]{IEEEtran}

\usepackage[utf8]{inputenc} 
\usepackage[T1]{fontenc}    
\usepackage{hyperref}       
\usepackage{url}            
\usepackage{booktabs}       
\usepackage{amsfonts}       
\usepackage{amsmath, bm}
\usepackage{mathrsfs} 
\usepackage{amsthm}
\usepackage{nicefrac}       
\usepackage{microtype}      
\usepackage{xcolor}         

\usepackage{graphicx}
\usepackage{caption}
\usepackage{subcaption}
\usepackage{enumerate}
\usepackage{cite}

\usepackage{tikz}
\usepackage{graphics}

\usepackage[noend]{algorithmic}
\usepackage{algorithm}

\usepackage{amsfonts}
\usepackage{amsmath}
\usepackage{amssymb}
\usepackage{mathtools}
\usepackage{amsthm}
\usepackage{bbm}

\usepackage[capitalize,noabbrev]{cleveref}

\theoremstyle{remark}
\newtheorem{remark}{\bf Remark}[section]

\theoremstyle{plain}
\newtheorem{theorem}{\bf Theorem}[section]

\newtheorem{lemma}{\bf Lemma}[section]

\newtheorem{corollary}{\bf Corollary}[section]

\theoremstyle{definition}

\newtheorem{assumption}{\bf Assumption}[section]

\title{Robust Stochastically-Descending Unrolled Networks}
\author{Samar Hadou, Navid NaderiAlizadeh, and Alejandro Ribeiro 
\thanks{S. Hadou and A. Ribeiro are with the
  Department of Electrical and Systems Engineering,
  University of Pennsylvania. Emails:
  \texttt{\{selaraby, aribeiro\}@seas.upenn.edu}.
  N. NaderiAlizadeh is with the Department of Biostatistics and Bioinformatics, Duke University. Email: \texttt{navid.naderi@duke.edu}}
  }
    \markboth{Accepted to IEEE Transactions on Signal Processing}%
    \date{}
\begin{document}
\maketitle
\begin{abstract}
Deep unrolling, or unfolding, is an emerging learning-to-optimize method that unrolls a truncated iterative algorithm in the layers of a trainable neural network. However, the convergence guarantees and generalizability of the unrolled networks are still open theoretical problems. To tackle these problems, we provide deep unrolled architectures with a stochastic descent nature by imposing descending constraints during training. The descending constraints are forced layer by layer to ensure that each unrolled layer takes, on average, a descent step toward the optimum during training. We theoretically prove that the sequence constructed by the outputs of the unrolled layers is then guaranteed to converge for in-distribution problems. We then analyze the generalizability to certain out-of-distribution (OOD) shifts in the optimization problems being solved. Our analysis shows that the descending nature imposed by the proposed constraints is transferable under these distribution shifts, subject to a generalization error, thereby providing the unrolled networks with OOD robustness. We numerically assess unrolled architectures trained with the proposed constraints in two different applications, including the sparse coding using learnable iterative shrinkage and thresholding algorithm (LISTA) and image inpainting using proximal generative flow (GLOW-Prox), and demonstrate the performance and robustness advantages of the proposed method.
\end{abstract}

\begin{IEEEkeywords}
  Algorithm Unrolling, Deep Unfolding, Learning to Optimize,  Constrained Learning.
 \end{IEEEkeywords}

\section{Introduction}

Learning-based approaches have provided unprecedented performance in many applications in the realm of signal processing. Due to the wide availability of data nowadays and recent advancements in hardware infrastructure, these learning-based approaches often outperform their traditional model-based alternatives, where the models are handcrafted based on domain knowledge. This is particularly true because of the large number of parameters used therein that can approximate complicated mappings otherwise difficult to characterize precisely. Furthermore, the number of layers in deep neural networks is usually much smaller than the number of iterations required in a traditional iterative algorithm, and modern computational platforms are optimized to execute the core operations of these networks efficiently. Therefore, learning-based methods provide a computational advantage over their traditional counterparts during inference.

However, deep networks are purely data-driven, and their structures are, therefore, hard to interpret, which is a serious concern in safety-critical applications, such as medical imaging and autonomous driving. In contrast, model-based methods are highly interpretable since they exploit prior domain knowledge to model the underlying physical process. Moreover, the lack of any prior domain knowledge prevents deep networks from generalizing well to unseen problems when abundant high-quality data is not available.

To combine the benefits of the two aforementioned regimes, the seminal work of \cite{gregor_learning_2010} has introduced \emph{algorithm unrolling}, or unfolding. Algorithm unrolling is a \emph{learning-to-optimize} technique that unrolls the iterations of an iterative algorithm through the layers of a customized deep neural network. Under this framework, the unrolled networks can be thought of as an iterative optimizer that is designed based on prior domain knowledge and whose parameters are learnable. Thus these unrolled networks inherit the interpretability of iterative models. Beyond its interpretability, unrolling inversely contributes to the elucidation of deep neural networks by viewing them as the unfolding of an optimization process \cite{yang2022transformers, yu2023white, von2023transformers}. In addition, unrolled networks typically have much fewer parameters compared to traditional deep neural networks, which alleviates the requirement of massive training datasets. An additional advantage of unrolling manifests in accelerated convergence rates, requiring substantially fewer layers (i.e., iterations) while concurrently achieving superior performance compared to standard iterative models \cite{monga_algorithm_2021}. The efficacy of unrolling is evident through its notable success in many applications such as computer vision \cite{zhang2020deep, Wei22, mou2022deep, Li20, qiao2023towards}, wireless networks \cite{hu2020iterative, chowdhury2021unfolding, liu2021deep, Schynol23, huang2023regularization, yang2023knowledge}, medical imaging \cite{li2021deep, nakarmi2020multi, chennakeshava2022deep, wang2023indudonet+}, and even training neural networks \cite{hadou2023stochastic, Ravi2016OptimizationAA}, among many others \cite{hershey2014deep, nasser2022deep, noah2023limited, Liu23}.

Deriving convergence guarantees is an essential theoretical question that arises when developing an iterative algorithm.  
Since unrolled optimizers resemble algorithms that are widely known to be convergent, this might imply convergence guarantees of the unrolled algorithms by default. However, this is not the case. 
The more parameters of the iterative algorithm we set free to learn, the wider the subspace of unrolled models we search and the harder it is to find a convergent model. As a toy example, 
Figure \ref{fig:comp} shows trajectories to a stationary point of a convex function made by both gradient descent (GD) and an unrolled version of it, where we make the descent direction learnable. Unlike gradient descent (left), the unrolled network (middle) moves randomly and in opposite directions, before eventually jumping to a near-optimal point in the last one or two layers. This is not surprising since the training procedure of the unrolled network does not regulate the trajectories across the intermediate unrolled layers. 
\begin{figure*}
    \centering
    \includegraphics[width=0.8\textwidth]{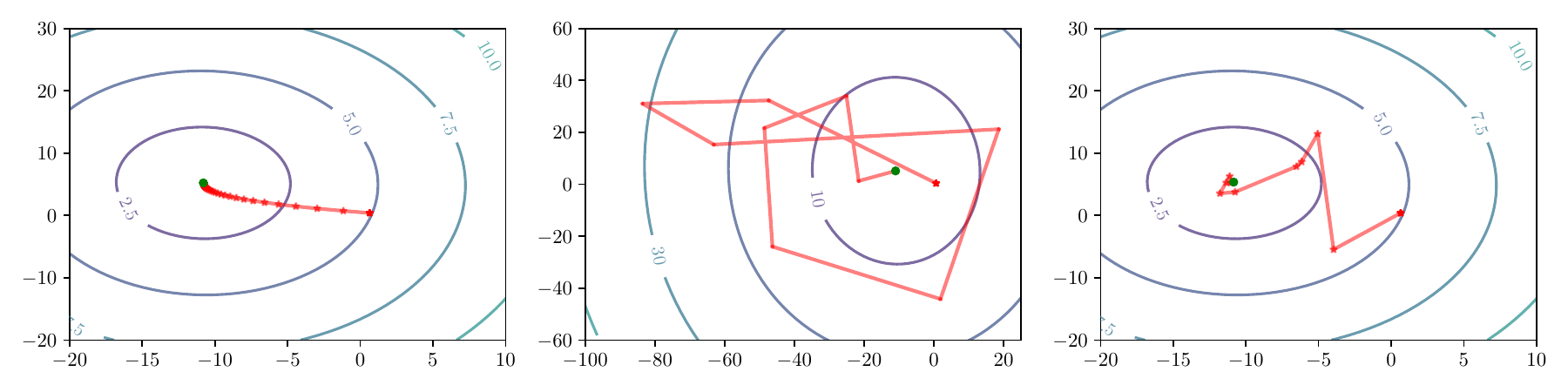}
     \caption{Trajectories of a test example made by (left) gradient descent, (middle) a standard unrolled optimizer, and (right) a constrained unrolled optimizer (ours). The three trajectories were initialized with the same value. The colored contours represent the values of the objective function that is being minimized (least squares).}
    \label{fig:comp}
\end{figure*}

\begin{figure*}
    \centering
    \includegraphics[width=0.8\textwidth]{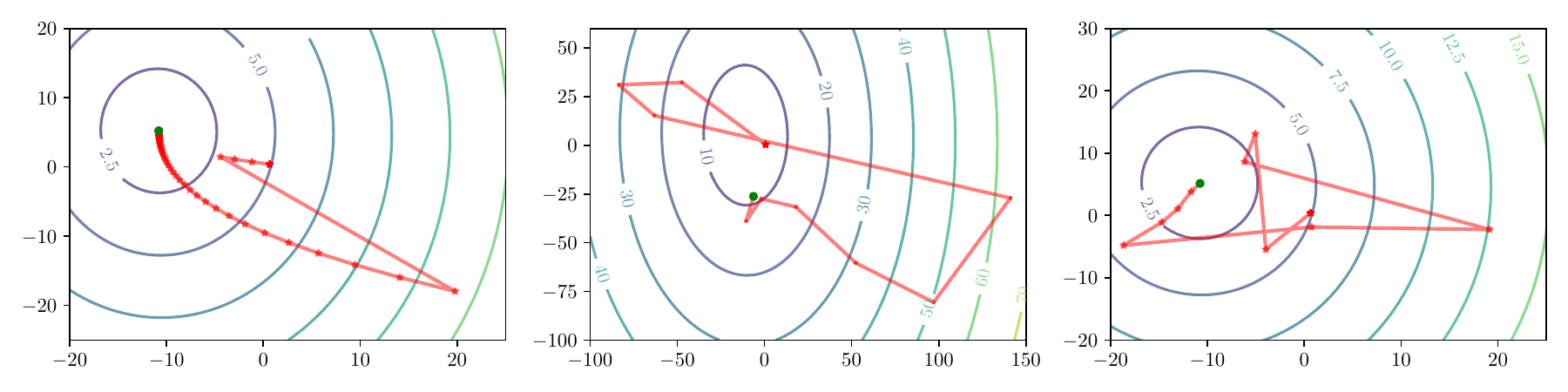}
     \caption{Trajectories of the same test example initialized at the same point but perturbed at the third step with additive noise. The center of the colored contours represents the optimal point, and the final estimate of the optimizers is depicted in green. The standard unrolled optimizer (middle) fails to reach the optimum under additive perturbations, while the constrained one (right), as well as gradient descent (left), succeeds in doing so.}
    \label{fig:comp2}
\end{figure*}

The lack of convergence across the unrolled layers engenders a critical issue despite the superior performance achieved at the last layers. Unrolled networks are brittle to perturbations added to the layers' outputs due to the lack of convergence guarantees. In contrast, standard iterative algorithms with descending guarantees, like GD, are known to be resilient to perturbations as they continue to follow descending directions in the iterations that follow the one where the perturbation occurs (see  \cref{fig:comp2} (left)). This resilience evidently does not extend to their unrolled counterparts since they lack descending guarantees. Revisiting our example, \cref{fig:comp2} (middle) confirms this fact as the unrolled optimizer fails to converge to the same point as in \cref{fig:comp} (middle) after adding perturbations to one of its layers; in fact, the figure shows that the distance to the optimal at the last layer is larger than it is at initialization. While additive perturbations at the layers are artificial and unlikely to occur in practice, this observation indicates that unrolled networks fail to emulate the behavior of the descent algorithms they are designed to mimic. This limits our ability to draw conclusions about the generalizability of these models to out-of-distribution (OOD) problems and raises concerns about implementing them in safety-critical applications. Contrarily, standard iterative algorithms are endowed with theoretical convergence guarantees, thereby eliminating any doubts about their generalizability. 

Previous works have investigated the convergence issue of algorithm unrolling. One approach for providing convergence guarantees, known as safeguarding, has been proposed in \cite{Heaton_Chen_Wang_Yin_2023, shen2021learning, Moeller_2019_ICCV, liu2021investigating}. The main idea of safeguarding is to check whether the estimate made by a certain layer, during inference, satisfies some form of descent criteria. If the criteria are not satisfied, the estimate is discarded and replaced with an estimate of the classic iterative algorithm. Another approach is sequential training, a learning procedure in which the unrolled layers are trained one by one using a tailored loss function for each layer, while the previous layers either continue their training \cite{Ito19} or remain frozen \cite{Sabulal20}.
A different line of work proved the existence of a convergent unrolled network for specific iterative algorithms such as the iterative shrinkage and thresholding algorithm (ISTA) \cite{Chen18theoretical, liu2018alista} and differentiable linearized ADMM \cite{Xie19DL-ADMM}. However, a mechanism that guarantees to find these convergent networks was never provided.
To resolve this issue, \cite{liu2018alista, Abadi15} suggest reducing the size of the search space by learning fewer parameters of the standard algorithm, which limits the network's expressivity. Furthermore, all these theoretical endeavors are tailored for specific optimization algorithms and do not generalize to other algorithms. This leaves many existing and potential unrolled networks without convergence guarantees.

In this paper, we aim to design unrolled algorithms that follow stochastic descending directions to the optimum as in \cref{fig:comp} (right), maintain resilience to additive perturbations as illustrated \cref{fig:comp2} (right) and demonstrate robustness to OOD shifts in the optimization problems they solve. On top of that, our goal is to establish theoretical convergence guarantees for unrolled networks regardless of the underlying iterative algorithms. 
We accomplish this by posing the training problem as a \emph{constrained learning} problem. Within this framework, we enforce each unrolled layer to takes a descent direction, on average. This descent direction needs not be the gradient direction, as shown in \cref{fig:comp} (right), in order to accelerate convergence even for harder landscapes. To complement our endeavor, we theoretically derive convergence guarantees for unrolled networks trained with our descending constraints and prove exponential rates of convergence. We also show that unrolled networks maintain a descending behavior under distribution shifts, with a generalization error that depends on the distance between the original and OOD distributions. In summary, our contributions are as follows:
\begin{itemize}
    \item We develop the use of descending constraints in training unrolled optimizers to provide them with descending properties (Sections \ref{sec:descend_constraint} and \ref{sec:algorithm}).
    \item We rigorously prove that an unrolled network trained under our framework is guaranteed to converge to a near-optimal region infinitely often (Section \ref{sec:theorem}). 
    \item We analyze the generalizability to OOD shifts and show that learning to follow a descending trajectory is transferable under OOD shifts (Section\ref{sec:robustness})
    \item We empirically show the performance and robustness of these unrolled networks in light of two applications: sparse coding (Section \ref{sec:sparse}) and image inpainting (Section \ref{sec:inpainting}).
\end{itemize}

\section{Problem Formulation} \label{sec:PF}

Consider a differentiable function $f$ drawn from the function class $\mathfrak{F}= \{ f(\cdot;{\bf x}) | {\bf x} \in {\cal X} \}$, where ${\cal X}$ is a compact set. The ultimate goal of algorithm unrolling is to learn an $L$-layered neural optimizer $\boldsymbol{\Phi}$ that converges to a stationary point of the function $f(\cdot; {\bf x})$ for any ${\bf x}$. This optimizer $\boldsymbol{\Phi}$ is parameterized by a sequence of parameters ${\bf W} = \{{\bf W}_l \}_{l=1}^L$, each of which resembles the parameters of an iterative optimization algorithm, e.g., proximal gradient descent. The outputs of the unrolled layers form an $L$-long trajectory of estimates, with the ideal goal of 
descending towards a stationary point.

Learning the optimizer $\boldsymbol{\Phi}$ can then be expressed as the bi-level optimization problem,
\begin{align}\label{eq:bi-level}
    \tag{Optimizer}
    \underset{{\bf W}}{\text{min}} \quad &  \mathbb{E} \big[\ell (\boldsymbol{\Phi}({\bf x};{\bf W}) , {\bf y}^*) \big] \\
        \tag{Optimizee}
    {s.t. \quad } &
        {\bf y}^* \in \underset{{\bf y} \in {\cal Y}}{\text{argmin}} \ f({\bf y};{\bf x}), \ \forall {\bf x}\in {\cal X}.
\end{align}
The inner problem is an instantiation of an unconstrained optimization problem, which we refer to as the optimizee, where the goal is to find a minimum of $f(\cdot;{\bf x})$ for a given ${\bf x}$. It, however, needs to be evaluated for all ${\bf x} \in {\cal X}$, which is computationally expensive. The outer problem aims to learn the optimizer $\boldsymbol{\Phi}: {\cal X} \rightarrow {\cal Y}$ by minimizing a suitably-chosen loss function $\ell: {\cal Y} \times {\cal Y} \rightarrow \mathbb{R}$. For instance, the toy example in \cref{fig:comp} considers a function class $\mathfrak{F}$ that contains functions of the form $f({\bf y};  {\bf x}) = \|{\bf x} - {\bf Ay} \|_2$, which is parameterized by ${\bf x} \in {\cal X}$. The optimizee problem is then to find the stationary point of $f$ for each $\bf x$. The optimizer $\boldsymbol{\Phi}$, in this case, is designed by unrolling GD in $L$ layers, each of which is a multilayer perceptron (MLP) with residual connections, i.e., the $l$th layer's output is ${\bf y}_{l} = {\bf y}_{l-1} - \text{MLP}({\bf y}_{l-1}, {\bf x}; {\bf W}_l)$. These MLPs are trained by minimzing the mean-square error (MSE) between the final prediction of $\boldsymbol{\Phi}$ and the optimal ${\bf y}^*$.

Solving the above bi-level optimization problem is not usually straightforward. However, it reduces to an unconstrained problem if the inner problem either has a closed-form solution or can be solved using an iterative algorithm.
For our problems of interest, we assume that there exists an iterative algorithm ${\cal A}$ that finds a (local) minimum ${\bf y}^*$ of the optimizee problem.
This algorithm is the one to be unrolled and is typically a gradient-based method, such as proximal gradient descent.
Given algorithm $\cal A$ and, therefore, the optimal solution to the optimizee,  
the optimizer problem can be then simplified to
\begin{equation}\label{eq:optimizer}
    \begin{split}
        \underset{{\bf W}}{\text{argmin}} \quad &  \mathbb{E} \big[ \ell\big(\boldsymbol{\Phi}({{\bf x}}; {\bf W}), {\bf y}^*\big)\big].
    \end{split}
\end{equation}
This form resembles the empirical risk minimization (ERM) problem used in supervised learning, where a model $\boldsymbol{\Phi}$ learns to map a given input ${\bf x}$ to its label ${\bf y}^*$ with little to no restrictions on the intermediate layers of $\boldsymbol{\Phi}$. 

It is pertinent to recall that the model $\boldsymbol{\Phi}$ in \eqref{eq:optimizer} is designed to mimic an iterative descending algorithm, where the output of each layer needs to take a step closer to the optimum. This behavior is not mandated by the loss function in \eqref{eq:optimizer} since there is no \emph{regularization} on the outputs of the optimizer's intermediate layers. The absence of regularization leads to the behavior in Figure \ref{fig:comp} (middle), where the unrolled optimizer hits the optimum at the last layer while the trajectory to the optimum is not what we expect from a standard descending algorithm. Since standard unrolling does not necessarily generate descending trajectories, they lack the convergence guarantees, inherent in standard optimization algorithms. This raises concerns about the generalizability of the unrolled optimizer, trained over finite samples from the data distribution, to unseen in-distribution functions $f(\cdot; {\bf x}) \in \mathfrak{F}$.

Yet another issue regarding the stability of unrolled optimizers becomes apparent when encountering additive perturbations \cite{hadou2023space, hadouspace, gama20}. Due to convergence guarantees, standard algorithms are guaranteed to continue descending even after perturbing their trajectories with additive noise. Reverting to our example, Figure \ref{fig:comp2} (middle) is evidence that this feature is not guaranteed for standard unrolled networks, which raises concerns about their stability.

To put it briefly, lacking convergence guarantees prompts concerns pertaining to generalizability and vulnerability to perturbations. In the following section, we tackle this issue by \emph{imposing descending constraints} over the intermediate layers. Under these constraints, we can think of the unrolled optimizers as stochastic descent algorithms, for which we can provide convergence guarantees.

\begin{remark}\label{rem:unsupervised}
While we only train unrolled optimizers in a supervised manner in this paper (c.f. \eqref{eq:optimizer}), our problem formulation and proposed method extend to the unsupervised case, where the optimal ${\bf y}^*$ is expensive to obtain. The learning problem then can be seen as directly minimizing $f$, i.e.,
\begin{equation}\label{eq:unsupervised}
    \begin{split}
        \underset{{\bf W}}{\text{argmin}} \quad &   \mathbb{E} \big[f\big(\boldsymbol{\Phi}({{\bf x}}; {\bf W})\big) \big] .
    \end{split}
\end{equation}
This approach is commonly used in the \emph{learning-to-learn} literature, e.g.,\cite{hadou2023stochastic, Ravi2016OptimizationAA, Andrychowicz16, liu2022optimization}. 
\end{remark}
\section{Constrained Unrolled Optimizers}\label{sec:proposal}
To provide unrolled algorithms with convergence guarantees, we propose the use of descending constraints in training the unrolled optimizers relying on constrained learning theory (CLT) \cite{chamon2022constrained}. The posed constrained learning problem \eqref{eq:constrainedUO}, while it may initially appear complex, is akin to a regularized ERM problem where the regularization parameter is a (dual) optimization variable to be learned. The key advantage of our approach over a typical regularized problem stems from the sensitivity interpretation of the dual variables \cite{boyd2004convex}, which easily discerns constraints that are harder to satisfy. This, in turn, grants us informed insights on which constraints to relax, if possible, during the design phase. In this section, we delineate our method to design these constrained unrolled optimizers by providing a detailed explanation of i) our choice of the descending constraints to employ, ii) the algorithm that solves the constrained problem and provides a probably near-optimal near-feasible solution, and iii) the convergence guarantees bestowed upon these optimizers.

\subsection{Descending Constraints}\label{sec:descend_constraint}
To force the unrolled optimizer $\boldsymbol{\Phi}$ to produce descending trajectories across its layers, 
we consider two different forms of descending constraints. The first ensures that the gradient norm of the objective function $f$ decreases over the layers on average, i.e., for each layer $l = 1, \dots, L$, we have
\begin{equation}\label{eq:gradConst}
    \tag{CI}
    \begin{split}
        \mathbb{E} \big[ \| {\nabla} f({\bf y}_{l};{\bf x})\|_2  -  (1-\epsilon) \ \|  {\nabla} f({\bf y}_{l-1};{\bf x}) \|_2 \big] \leq 0,
    \end{split}
\end{equation}
where ${\bf y}_l$ is the $l$-th unrolled layer's output, ${\bf y}_L = \boldsymbol{\Phi}({\bf x};{\bf W})$ is the final estimate, and $\epsilon \in (0,1)$ is a design parameter. The second forces the distance to the optimal to shrink over the layers, that is,
\begin{equation}\label{eq:distConst}
    \tag{CII}
    \begin{split}
        \mathbb{E} \big[ \| {\bf y}_{l} - {\bf y}^*\|_2 \big.- \big. (1-\epsilon) \ \|  {\bf y}_{l-1} - {\bf y}^* \|_2
        \big] \leq 0.
    \end{split}
\end{equation}
The initial value ${\bf y}_0$ is drawn from a Gaussian distribution ${\cal N}(\mu_0, \sigma_0^2{\bf I})$. The expectation in the two constraints is evaluated over the joint distribution of the input $\bf x$ and the initialization ${\bf y}_0$. We also add noise vector ${\bf n}_l$, sampled from a Gaussian distribution ${\cal N}(0, \sigma^2_l{\bf I}) =: {\cal N}_l$, to the input of layer $l$--during training--to ensure that the output ${\bf y}_l$ is independent of the trajectory the network follows up till layer $l$. The noise variance $\sigma_l^2$ decreases over the layers to allow the optimizer to converge.

Having these two constraints in place, we re-write \eqref{eq:optimizer} as the following constrained learning problem
\begin{equation}\label{eq:constrainedUO}
    \tag{CO}
    \begin{split}
        P^* = \min_{{\bf W}} \quad & \mathbb{E} \left[\ell(\boldsymbol{\Phi}({\bf x};{\bf W}), {\bf y}^*) \right]\\
        {s. t.} \quad \ &
        \mathbb{E}_{{\cal N}_l} \big[{\cal C}({\bf y}_l, {\bf y}_{l-1})\big] \leq 0, \ \forall l\leq L, \\
        & {\bf y}_l = \phi({\bf y}_{l-1} + {\bf n}_l;{\bf W}_l), \ 0<l\leq L,
    \end{split}
\end{equation}
where ${\cal C}(.,.)$ refers to the loss function in either \eqref{eq:gradConst} or \eqref{eq:distConst}, and $\phi(\cdot;{\bf W}_l)$ represents a layer in $\boldsymbol{\Phi}$ parameterized by ${\bf W}_l$. The second set of constraints in \eqref{eq:constrainedUO} should be perceived as implicit constraints that are imposed by default in the structure of the unrolled optimizer $\boldsymbol{\Phi}$ and, therefore, omitted in the following analysis.

It is worth noting that the choice between \eqref{eq:gradConst} or \eqref{eq:distConst} is arbitrary. The former requires the objective function $f$ to be differentiable while the latter depends on the knowledge of the optimal ${\bf y}^*$. Yet \eqref{eq:gradConst} can be extended to non-differentiable functions by replacing the gradient with a subgradient of $f$. However, \eqref{eq:gradConst} could be more challenging to satisfy if the optimizee problem is either constrained or non-convex. This is true since it could push the estimates to a non-feasible solution in the first instance or another local minimum than ${\bf y}^*$ in the second one. In the case of unsupervised training (as in \cref{rem:unsupervised}), the constraints \eqref{eq:distConst} add extra computational complexity since it requires calculating the optimal solution ${\bf y}^*$ that is no longer required in evaluating the training loss. Other descending constraints could also be used, such as decreasing the value of the function $f$ directly.

\begin{remark}
The noise added to each layer's output is present only during training and is omitted during execution. On one hand, it ensures that each layer takes a step independently of the path followed by the network in previous layers. On the other hand, it compels the unrolled network to explore the optimization landscape during training, which empirically helps the model's generalizability when accompanied with the proposed descending constraints. It is pertinent to notice that our theoretical analysis in the subsequent sections remains valid when the noise is entirely omitted.   
\end{remark}

\subsection{Probably and Approximately Correct Solution}\label{sec:algorithm}
To solve \eqref{eq:constrainedUO}, we construct the dual problem to leverage CLT \cite[Theorem 1]{chamon2022constrained}.
The dual problem of \eqref{eq:constrainedUO} is equivalent to finding the saddle point of the Lagrangian function 
\begin{equation}\label{eq:statlagrang}
    \begin{split}
        &{\cal L}({\bf W}, {\boldsymbol \lambda}) =  \mathbb{E} \left[\ell(\boldsymbol{\Phi}({\bf x};{\bf W}) , {\bf y}^*) \right] 
        + 
        \sum_{l=1}^L {\lambda}_{l}
        \mathbb{E} \big[ {\cal C}({\bf y}_l, {\bf y}_{l-1})    \Big],
    \end{split}
\end{equation}
where ${\boldsymbol \lambda} \in {\mathbb{R}^{L}_+}$ is a vector collecting the dual variables $\lambda_{l}$. Since evaluating the expectation over an unknown distribution is not attainable, we resort to using an \emph{empirical} Lagrangian function
\begin{equation}\label{eq:emplagrang}
    \begin{split}
        & \widehat{\cal L}({\bf W}, {\boldsymbol \lambda}) =  \widehat{\mathbb{E}} [\ell(\boldsymbol{\Phi}({\bf x}; {\bf W}), {\bf y}^*) ] 
        + 
        \sum_{l=1}^L  {\lambda}_{l}
        \widehat{\mathbb{E}} \big[ {\cal C}({\bf y}_l, {\bf y}_{l-1}) \big],
    \end{split}
\end{equation}
where $\widehat{\mathbb{E}}$ denotes the sample mean evaluated over $N$ realizations $\{({\bf x}_{n}, {\bf y}_{n}^*)\}_{n=1}^{N}$. We then define the empirical dual problem as 
\begin{equation}\label{eq:dual}
    \tag{DO}
    \begin{split}
        \widehat{D}^* = \max_{{\boldsymbol \lambda} \in \mathbb{R}^{L}_+} \ \min_{{\bf W}} \ \widehat{\cal L}({\bf W}, {\boldsymbol \lambda}).
    \end{split}
\end{equation}
Solving \eqref{eq:dual} and finding the saddle point of $\widehat{\cal L}$ is doable by alternating between minimizing with respect to ${\bf W}$ for a fixed $\boldsymbol{\lambda}$ and then maximizing the latter, as described in Algorithm \ref{alg:PD}. The convergence of this algorithm has been proved in \cite[Theorem 2]{chamon2022constrained}.
\begin{algorithm}[!t]
\caption{Primal-Dual Training Algorithm for \eqref{eq:constrainedUO}}\label{alg:PD}
\begin{algorithmic}[1]
 \STATE {\bfseries Input}: Dataset $\{({\bf x}_{n}, {\bf y}_{n}^*) \sim D_x\}_{n=1}^{N}$.  
 \STATE Initialize ${\bf W} =\{{\bf W}_l\}_{l=1}^L$ and ${\boldsymbol \lambda} = \{ {\boldsymbol \lambda}_{l} \}_{l=1}^L$.
 \FOR{each epoch}
    \FOR{each batch}
    \STATE Compute $\widehat{\cal L}({\bf W}, {\boldsymbol \lambda})$ as in \eqref{eq:emplagrang}.
    \STATE Update the primal variable:
    \begin{equation}
        {\bf W} \gets [{\bf W} - \mu_{w} \nabla_{{\bf W}} \widehat{\cal L}({\bf W}, {\boldsymbol \lambda})].
    \end{equation}
    \ENDFOR
\STATE Update the dual variable:
\begin{equation}
    \begin{split}
        {\boldsymbol \lambda} & \gets [{\boldsymbol \lambda} + \mu_{\lambda} \nabla_{{\boldsymbol \lambda}} \widehat{\cal L}({\bf W}, {\boldsymbol \lambda})]_+.
    \end{split}
\end{equation}
\ENDFOR
 \STATE {\bfseries Return:} ${\bf W}^* \gets {\bf W}.$
\end{algorithmic}
\end{algorithm}

Nevertheless, Algorithm \ref{alg:PD} only finds a solution to \eqref{eq:dual}, which is not equivalent to the primal problem \eqref{eq:constrainedUO} that we aim to solve. The difference between the two problems is due to an \emph{approximation} gap and \emph{estimation} gap induced by replacing the statistical expectations with empirical ones. A characterization of this gap is provided by CLT under the following assumptions:
\begin{assumption}\label{A1}
The functions $\ell(\cdot, {\bf y})$ and ${\cal C}(\cdot, {\bf y})$ are $M$-Lipschitz continuous for all ${\bf y}$ and bounded. The objective function $f(\cdot; {\bf x})$ is also $M$-Lipschitz continuous for all ${\bf x}$.
\end{assumption}
\begin{assumption}\label{A2}
Let $\phi_l {\scriptscriptstyle\circ} \dots {\scriptscriptstyle\circ} \phi_1 \in {\cal P}_l$ be a composition of $l$ unrolled layers, each of which is parameterized by ${\bf W}_{i}, i \leq l$, and $\overline{\cal P}_l = \overline{conv}({\cal P}_l)$ be the convex hull of ${\cal P}_l$.
Then, for each $\overline\phi_l {\scriptscriptstyle\circ} \dots {\scriptscriptstyle\circ} \overline\phi_1 \in \overline{\cal P}$ and $\nu > 0$, there exists ${\bf W}_{1:l}$ such that $\mathbb{E} \left[ |\phi_l {\scriptscriptstyle\circ} \dots {\scriptscriptstyle\circ} \phi_1({\bf x}; {\bf W}_{1:l}) - \overline\phi_l {\scriptscriptstyle\circ} \dots {\scriptscriptstyle\circ} \overline\phi_1({\bf x})| \right] \leq \nu$. 
\end{assumption}
\begin{assumption}\label{A3}
The set $\cal Y$ is compact, the conditional distribution $p({\bf x}|{\bf y}^*)$ is non-atomic for all ${\bf y}^*$, and
the functions ${\bf y} \to \ell(\boldsymbol{\Phi}(\cdot), {\bf y}) p(\cdot|{\bf y})$ and ${\bf y} \to {\cal C}(\boldsymbol{\Phi}(\cdot), {\bf y}) p(\cdot|{\bf y})$ are uniform continuous in the total variation topology for all $\boldsymbol{\Phi} \in \overline{\cal P}_l, l\leq L$.
\end{assumption}
Assumptions \ref{A2} and \ref{A3} aim to establish a connection between \eqref{eq:constrainedUO} and its functional counterpart that learns a composition of $L$ functions $ \overline\phi_L {\scriptscriptstyle\circ} \dots {\scriptscriptstyle\circ} \overline\phi_1 \in \overline{\cal P}_L $. Assumption \ref{A3} asserts a zero duality gap between the functional problem and its dual while Assumption \ref{A2} ensures that the parameterization ${\bf W}$ is rich enough to approximate the functional space $\overline{\cal P}_L$. Through these assumptions, CLT analyzes the change to the duality gap of the functional problem when it is  approximated by \eqref{eq:constrainedUO}. On the other hand, CLT characterizes the estimation gap through the sample complexity indicated by Assumption \ref{A4}. Moreover, CLT requires the problem to be strictly feasible, as described in Assumption \ref{A5}.
\begin{assumption}\label{A4}
There exists $\zeta(N, \delta)\geq 0$ that is monotonically decreasing with $N$, for which it holds with probability $1-\delta$ that 
$|\mathbb{E}[\ell(\boldsymbol{\Phi}({\bf x}; {\bf W}), {\bf y}^*)] - \widehat{\mathbb{E}}[\ell(\boldsymbol{\Phi}({\bf x}; {\bf W}), {\bf y}^*)]| \leq \zeta(N, \delta)$ and
$|\mathbb{E}[{\cal C}({\bf y}_l, {\bf y}_{l-1})] - \widehat{\mathbb{E}}[{\cal C}({\bf y}_l, {\bf y}_{l-1})]| \leq \zeta(N, \delta)$
for all $l$.
\end{assumption}
\begin{assumption}\label{A5}
There exists $\boldsymbol{\Phi} \in {\cal P}_L$ that is strictly feasible, i.e., $\mathbb{E} \big[{\cal C}({\bf y}_l, {\bf y}_{l-1})\big] \leq -M\nu - \xi$ and $\widehat{\mathbb{E}} \big[{\cal C}({\bf y}_l, {\bf y}_{l-1})\big] \leq -\xi$ for all $l\leq L$, with $M$ and $\nu$ as in Assumptions \ref{A1} and \ref{A2} and $\xi > 0$.
\end{assumption}

These assumptions are easily achievable in practice. The Lipschitz continuity in Assumption \ref{A1} holds for a wide range of loss functions, including $\ell_1$ and $\ell_2$ norms. Modern neural networks satisfy the universal approximation theorem (Assumption \ref{A2}), which demands that the family of neural networks that we consider to be sufficiently rich to approximate the update function $\phi$ up to a factor $\nu$. Assumption \ref{A3} requires the output of the neural network to be bounded, which can be ensured using a compact input set and bounded learnable parameterization. Additionally, it requires the conditional probabilities to be nonatomic, a condition that can be met by augmenting the dataset with solutions of varying precision for each instance of the optimizee problem. Assumption \ref{A4} imposes a mild assumption on the sample complexity of the models, allowing us replace the statistical expectation with the empirical average. The strict feasibility condition in Assumption \ref{A5} can also be attained by appropriately adjusting the design parameter $\epsilon$.

CLT then asserts that a stationary point of \eqref{eq:dual} provides a probably, approximately correct solution to \eqref{eq:constrainedUO}. 
\begin{theorem}[CLT \cite{chamon2022constrained}]\label{lem:CL}
Let $({\bf W}^*, \boldsymbol{\lambda}^*)$ be a stationary point of \eqref{eq:dual}. Under Assumptions \ref{A1}- \ref{A5}, it holds, for some constant $\rho$, that 
\begin{equation}\label{eq:nearOptimal}
    |P^* - \widehat{D}^*| \leq M \nu + \rho \ \zeta(N, \delta), \ \text{and}
\end{equation} 
\begin{equation}\label{eq:constraints}
    \begin{split}
         & \mathbb{E} \big[ {\cal C}({\bf y}_l, {\bf y}_{l-1}) \big]  \leq \zeta(N, \delta), \quad  \forall l,
    \end{split}
\end{equation}
with probability $1-\delta$ each and with $\rho \geq \max \{ \| \boldsymbol{\lambda}^*\|, \| \overline{\boldsymbol{\lambda}}^* \| \}$, where $\overline{\boldsymbol{\lambda}}^* = \textup{argmax}_{{\boldsymbol \lambda}} \ \min_{{\bf W}} \ {\cal L}({\bf W}, {\boldsymbol \lambda})$.
\end{theorem}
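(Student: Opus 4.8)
The plan is to read \cref{lem:CL} as the specialization to \eqref{eq:constrainedUO} of the probably-approximately-correct guarantee of constrained learning theory \cite{chamon2022constrained}; the proof then consists of recasting \eqref{eq:constrainedUO} in the form that theory covers and checking that Assumptions \ref{A1}--\ref{A5} supply the hypotheses it needs. I would organize the argument as three successive reductions, each contributing one term to \eqref{eq:nearOptimal}: from \eqref{eq:constrainedUO} to its \emph{functional} relaxation (zero error, via strong duality); from the functional problem back to the parameterized one (approximation error controlled by $M\nu$); and from statistical to empirical expectations (estimation error controlled by $\zeta(N,\delta)$, weighted by the dual norm). The functional relaxation replaces $\boldsymbol{\Phi}(\cdot;{\bf W})$ by a composition $\overline\phi_L {\scriptscriptstyle\circ}\cdots{\scriptscriptstyle\circ}\overline\phi_1 \in \overline{\cal P}_L$; the layerwise recursion ${\bf y}_l = \phi({\bf y}_{l-1}+{\bf n}_l;{\bf W}_l)$ in \eqref{eq:constrainedUO} is handled directly by Assumption \ref{A2}, which is stated for $l$-fold compositions precisely so that the per-layer approximation errors do not compound beyond a single factor $\nu$.

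The step I expect to be the crux is the \emph{functional strong duality}. One shows that the attainable set of tuples $\big(\mathbb{E}[\ell(\boldsymbol{\Phi}({\bf x};\cdot),{\bf y}^*)],\,\mathbb{E}[{\cal C}({\bf y}_1,{\bf y}_0)],\dots,\mathbb{E}[{\cal C}({\bf y}_L,{\bf y}_{L-1})]\big)$ swept over $\overline{\cal P}_L$ is convex --- this is where the nonconvexity of $\ell$ and $\cal C$ in the layers is absorbed --- using Assumption \ref{A3} (the non-atomic conditional $p({\bf x}\mid{\bf y}^*)$ and total-variation continuity of the loss/constraint maps) through a Lyapunov-type convexification argument. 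Together with the Slater point furnished by Assumption \ref{A5} at the functional level, this yields $\overline P^* = \overline D^*$ and, by the usual bound $\|\overline{\boldsymbol{\lambda}}^*\|_1 \le (\text{objective range})/\xi$, a finite functional dual optimum. The second reduction then uses Assumption \ref{A2} to match any functional composition within $\nu$ in expectation and Assumption \ref{A1} to transfer this to an $M\nu$ change in the objective and in every constraint; propagating it through duality --- the parameterized dual being the dual of an at-most-$M\nu$ perturbation of the functional primal, whose value is Lipschitz in the perturbation with modulus $\|\overline{\boldsymbol{\lambda}}^*\|$ --- controls $|P^* - \overline D^*|$ by a multiple of $M\nu$, and the empirical Slater condition in Assumption \ref{A5} keeps $\|{\boldsymbol{\lambda}}^*\|$ finite.

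For the last reduction I would invoke Assumption \ref{A4}: the uniform deviation $|\mathbb{E}[\cdot]-\widehat{\mathbb{E}}[\cdot]|\le\zeta(N,\delta)$ on the objective and on each constraint implies $|{\cal L}({\bf W},{\boldsymbol{\lambda}})-\widehat{\cal L}({\bf W},{\boldsymbol{\lambda}})|\le(1+\|{\boldsymbol{\lambda}}\|_1)\,\zeta(N,\delta)$ uniformly in ${\bf W}$, whence $|D^* - \widehat D^*|\le\rho\,\zeta(N,\delta)$ for any $\rho\ge\max\{\|{\boldsymbol{\lambda}}^*\|,\|\overline{\boldsymbol{\lambda}}^*\|\}$; chaining the three reductions yields \eqref{eq:nearOptimal}. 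For the near-feasibility bound \eqref{eq:constraints}, I would use that $({\bf W}^*,{\boldsymbol{\lambda}}^*)$ is a stationary point of \eqref{eq:dual}: since $\widehat{\cal L}$ is affine in ${\boldsymbol{\lambda}}$ with slope $\widehat{\mathbb{E}}[{\cal C}({\bf y}_l,{\bf y}_{l-1})]$ in coordinate $l$, the projected-gradient / KKT conditions in ${\boldsymbol{\lambda}}$ force $\widehat{\mathbb{E}}[{\cal C}({\bf y}_l,{\bf y}_{l-1})]\le 0$ for every $l$, and one more application of Assumption \ref{A4} turns this into $\mathbb{E}[{\cal C}({\bf y}_l,{\bf y}_{l-1})]\le\zeta(N,\delta)$. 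Every ``with probability $1-\delta$'' clause is inherited from the high-probability events of Assumption \ref{A4}, applied to the objective and to the $L$ constraints. The routine parts are the Lipschitz bookkeeping in the second and third reductions; the genuinely delicate point, imported from \cite{chamon2022constrained}, is the convexification and strong-duality argument at the functional level.
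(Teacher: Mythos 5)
The paper does not prove this statement at all: it is imported verbatim from the constrained learning theory of \cite{chamon2022constrained}, with Assumptions \ref{A1}--\ref{A5} stated precisely so that the cited theorem applies to \eqref{eq:constrainedUO}. Your reconstruction --- functional relaxation with strong duality via non-atomicity and Lyapunov convexification, an $M\nu$ parameterization gap from Assumptions \ref{A1}--\ref{A2}, a $\rho\,\zeta(N,\delta)$ estimation gap from Assumption \ref{A4}, and near-feasibility from dual stationarity plus the uniform deviation bound --- is a faithful account of how that cited proof goes, so it matches the (externally sourced) argument the paper relies on.
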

As per Theorem \ref{lem:CL}, the solution ${\bf W}^*$ is near-optimal and near-feasible. Moreover, the duality gap depends on a smoothness constant $M$, the richness parameter $\nu$, the sample complexity $\zeta(N, \delta)$, and the sensitivity to the constraints embodied by the dual variables $\boldsymbol{\lambda}^*$.

Theorem \ref{lem:CL} also implies that an optimizer trained via Algorithm \ref{alg:PD} satisfies the descending constraints at each layer up to a fixed margin with a probability of $1-\delta$. This margin solely depends on the sample complexity, which can be kept small by increasing the number of samples $N$. This result postulates that at each layer, the unrolled optimizer with a high probability takes a step closer to an optimal point of the optimizee. As a consequence, the trained optimizer can be interpreted as a \textit{stochastic} descent algorithm.


\subsection{Convergence Guarantees}\label{sec:theorem}
The above result does not directly guarantee that the sequence $\{{\bf y}_l \}_{l=1}^L$ obtained at the outputs of the unrolled layers \textit{converges} to a stationary point of the function $f({\bf y}; \cdot)$. The probably, approximately correct solution we have attained only satisfies the descending constraints with a probability $1-\delta$. Even for small values of $\delta$, the probability that all the constraints are satisfied together, which is $(1-\delta)^L$, is exponentially decreasing with the number of layers $L$. Therefore, a rigorous analysis is still required to affirm the convergence guarantees of unrolled optimizers. In Theorem \ref{thm:convergence}, we confirm that the unrolled optimizer trained under constraints \eqref{eq:gradConst} indeed has convergence guarantees.

\begin{theorem}\label{thm:convergence} Given a near-optimal solution of \eqref{eq:constrainedUO} under constraints \eqref{eq:gradConst}, which satisfies \eqref{eq:constraints} with a probability $1-\delta$ and generates a sequence of random variables $\{ {\bf y}_l | l \geq 0 \}$. Then, under Assumption \ref{A1}, it holds that 
    \begin{equation}
    \lim_{l \rightarrow \infty} \mathbb{E}  \Big[ \min_{k\leq l} \|{\nabla} f({\bf y}_{k}; {\bf x})\| \Big] \leq \frac{1}{\epsilon} \left(\zeta(N, \delta) + \frac{\delta M}{1-\delta}\right) \ \ a.s.
    \end{equation}
with $\zeta(N, \delta)$ as described in Assumption \ref{A4}.
\end{theorem}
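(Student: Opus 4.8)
The plan is to treat the trained optimizer as a stochastic process and exploit the near-feasibility guarantee \eqref{eq:constraints} under constraint \eqref{eq:gradConst}, which states that $\mathbb{E}[\|\nabla f({\bf y}_l;{\bf x})\|] \leq (1-\epsilon)\mathbb{E}[\|\nabla f({\bf y}_{l-1};{\bf x})\|] + \zeta(N,\delta)$ holds with probability $1-\delta$ for each layer $l$. The central difficulty, flagged in the paragraph preceding the statement, is that the joint event over all $L$ layers has probability only $(1-\delta)^L$, which degenerates as $L\to\infty$; so I cannot simply chain the per-layer inequalities. First I would introduce the shorthand $g_l := \|\nabla f({\bf y}_l;{\bf x})\|$ and the ``good event'' $E_l$ on which layer $l$'s constraint holds, with $\mathbb{P}(E_l)\geq 1-\delta$. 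On $E_l$ the recursion $\mathbb{E}[g_l] \leq (1-\epsilon)\mathbb{E}[g_{l-1}] + \zeta$ applies; on the complement $E_l^c$, I would use Assumption \ref{A1} (boundedness / $M$-Lipschitzness of $f$, hence a uniform bound $M$ on $\|\nabla f\|$ over the compact domain) to control the contribution, which is where the $\delta M/(1-\delta)$ term will enter.

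Next I would derive a single recursion on the unconditional expectations $a_l := \mathbb{E}[g_l]$ by splitting on $E_l$: $a_l = \mathbb{E}[g_l \mathbbm{1}_{E_l}] + \mathbb{E}[g_l \mathbbm{1}_{E_l^c}] \leq \big((1-\epsilon)a_{l-1} + \zeta\big) + \delta M$, using that $g_l \leq M$ always and $\mathbb{P}(E_l^c)\leq\delta$ (with a little care that the conditional-expectation bound from \eqref{eq:constraints} is an unconditional expectation of a product with the indicator — I would handle this by noting $\mathbb{E}[g_l] - \mathbb{E}[g_l\mathbbm{1}_{E_l^c}] = \mathbb{E}[g_l\mathbbm{1}_{E_l}]$ and bounding the dropped nonnegative mass appropriately, or by absorbing it into the constant). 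Iterating $a_l \leq (1-\epsilon)a_{l-1} + (\zeta + \delta M)$ and summing the geometric series gives $a_l \leq (1-\epsilon)^l a_0 + \frac{\zeta + \delta M}{\epsilon}$, so $\limsup_{l\to\infty} \mathbb{E}[g_l] \leq \frac{1}{\epsilon}(\zeta + \delta M)$.

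Finally, to pass from $\mathbb{E}[g_l]$ to $\mathbb{E}[\min_{k\leq l} g_k]$ and get the stated ``a.s.'' conclusion, I would observe that $\min_{k\leq l} g_k \leq g_l$ pointwise, so $\mathbb{E}[\min_{k\leq l} g_k] \leq a_l$, and the same limsup bound follows; since $\min_{k\leq l} g_k$ is nonincreasing in $l$ and nonnegative, it converges pointwise (a.s.) to a limit $g_\infty \geq 0$, and by monotone/dominated convergence $\mathbb{E}[g_\infty] = \lim_l \mathbb{E}[\min_{k\leq l} g_k] \leq \frac{1}{\epsilon}(\zeta + \delta M)$, which is exactly the claim once one checks the constant matches $\frac{1}{\epsilon}(\zeta + \frac{\delta M}{1-\delta})$ — the slightly sharper $\frac{\delta M}{1-\delta}$ rather than $\delta M$ presumably arises from a tighter accounting of the bad-event mass (e.g. conditioning and using $1/(1-\delta)$ normalization on the good event), and reconciling that constant is the one bookkeeping step I would be most careful about. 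The main obstacle throughout is precisely this handling of the low-probability bad events at each layer without the naive $(1-\delta)^L$ blow-up; the resolution is that we only ever need the \emph{expectation} recursion, for which a per-layer $\delta M$ additive penalty suffices and does not compound multiplicatively.
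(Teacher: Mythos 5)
Your proposal is correct, and its opening step coincides with the paper's: the total-expectation split over the per-layer ``good event'' (constraint satisfied) versus its complement, with the complement controlled by the uniform gradient bound $M$ from Assumption \ref{A1}, is exactly the paper's \eqref{eq:totalexp}--\eqref{eq:bound1}. Where you genuinely diverge is in everything after that. The paper promotes the recursion to a conditional supermartingale inequality on $Z_l = \mathbb{E}_{D_x}\|\nabla f({\bf y}_l)\|$, viewed as a random variable through the trajectory and the injected noise, introduces the stopped sequences $\alpha_l,\beta_l$ with the indicator $\mathbf{1}\{Z_l^{\text{best}}>\eta\}$, invokes the Robbins--Siegmund supermartingale convergence theorem to conclude $\sum_l \beta_l<\infty$ and hence a vanishing liminf almost surely, and finally passes to the pointwise gradient norm via Fatou's lemma. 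You instead iterate the recursion directly on the unconditional expectations $a_l$ --- essentially the geometric-sum computation the paper defers to Lemma \ref{thm:rate} in Appendix \ref{app:rate} --- and finish with $\mathbb{E}[\min_{k\le l}g_k]\le \min_{j\le l}a_j$ plus monotone convergence of the nonincreasing, bounded running minimum. Your route is shorter and avoids the martingale machinery entirely; moreover the constant you flag as needing reconciliation is not a gap: your $\delta M$ is \emph{sharper} than the stated $\delta M/(1-\delta)$, the latter arising only because the paper retains the factor $(1-\delta)$ on the contraction term $(1-\epsilon)Z_{l-1}$ and divides it out when solving for the fixed point, so your bound implies the theorem's. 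What the paper's heavier argument buys is the ``a.s.'' qualifier: its $Z_l$ retains the randomness of the initialization and the layer noise, so the supermartingale theorem yields a per-realization guarantee on the conditional expected gradient norm, whereas your recursion controls only the total average. If the expectation in the statement is read as the full expectation, the ``a.s.'' is vacuous and your proof is complete as is; if it is read as $\mathbb{E}_{D_x}$ with trajectory randomness remaining, yours establishes the (still meaningful) averaged form of the claim, and you would need to reinstate the paper's conditional-supermartingale step to recover the almost-sure version.
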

\begin{proof}
    The proof is relegated to Appendix \ref{app:thm1}.
\end{proof}
 Theorem \ref{thm:convergence} states that the sequence $\{ {\bf y}_l\}_l$ infinitely often visits a region around the optimal where the expected value of the gradient norm hits a value under $\frac{1}{\epsilon} \big(\zeta(N, \delta) + \frac{\delta M}{1-\delta}\big)$. This near-optimal region is determined by the sample complexity of $\boldsymbol{\Phi}$, the Lipschitz constant $M$, and a design parameter $\epsilon$ of the constraints. The same analysis can be conducted for the case when the constraints \eqref{eq:distConst} is considered leading to similar guarantees.
 




The convergence rate of the unrolled network trained according to \eqref{eq:constrainedUO} is derived in \cref{thm:rate}. As per this lemma, the expected value of the gradient norm drops exponentially with the number of unrolled layers $L$. This result is explicable through the recursive nature of the imposed constraints. 
\begin{lemma}\label{thm:rate}
For a trained unrolled optimizer ${\bf W}^*$ that satisfies Theorem \ref{lem:CL}, the gradient norm achieved after $L$ layers satisfies
\begin{equation}
\begin{split}
     \mathbb{E} \big[ {\|{\nabla} f({\bf y}_{L})\|} \big] 
    \leq (1-\delta)^L(&1-{\epsilon})^L \ \mathbb{E} {\|  {\nabla} f({\bf y}_{0})\|} \\
    & + \frac{1}{\epsilon} \left( \zeta(N, \delta)
      + \frac{\delta M}{1-\delta} \right).
\end{split}
\end{equation}
\end{lemma}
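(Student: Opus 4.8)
The plan is to convert the per-layer descent constraint \eqref{eq:gradConst}---which, by \cref{lem:CL}, holds in expectation up to the slack $\zeta(N,\delta)$ with probability $1-\delta$---into a scalar linear recursion for $a_l := \mathbb{E}\big[\|{\nabla} f({\bf y}_l)\|\big]$ and then to unroll that recursion as a finite geometric sum.

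Concretely, I would first isolate, at each layer $l$, the event $E_l$ on which \eqref{eq:constraints} holds. Rearranging \eqref{eq:gradConst} on $E_l$ gives the conditional bound $\mathbb{E}\big[\|{\nabla}f({\bf y}_l)\|\,\big|\,{\bf y}_{l-1},E_l\big] \le (1-\epsilon)\|{\nabla}f({\bf y}_{l-1})\| + \zeta(N,\delta)$, while on $E_l^c$---which has probability at most $\delta$---\cref{A1} supplies the deterministic fallback $\|{\nabla}f({\bf y}_l)\| \le M$, since the gradient norm of an $M$-Lipschitz function is bounded by $M$. Taking expectations and using that, thanks to the per-layer noise injection ${\bf n}_l$, the occurrence of $E_l$ is (conditionally) decoupled from the trajectory up to layer $l-1$, I obtain
\[
a_l \;\le\; (1-\delta)\Big[(1-\epsilon)\,a_{l-1} + \zeta(N,\delta)\Big] + \delta M, \qquad 1\le l\le L.
\]
Iterating this inequality down to $l=0$ with ratio $r:=(1-\delta)(1-\epsilon)$ yields $a_L \le r^L a_0 + \big[(1-\delta)\zeta(N,\delta)+\delta M\big]\sum_{j=0}^{L-1}r^j \le r^L a_0 + \dfrac{(1-\delta)\zeta(N,\delta)+\delta M}{1-r}$. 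The proof then closes with the elementary estimate $1-r = \epsilon + \delta(1-\epsilon) \ge \epsilon(1-\delta)$, which turns the residual into $\tfrac1\epsilon\big(\zeta(N,\delta)+\tfrac{\delta M}{1-\delta}\big)$ and recognizes $r^L=(1-\delta)^L(1-\epsilon)^L$, giving exactly the stated bound (here $a_0 = \mathbb{E}\|{\nabla}f({\bf y}_0)\|$ with ${\bf y}_0$ drawn from the Gaussian initialization).

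The step I expect to be the main obstacle is precisely the passage from the probabilistic guarantee of \cref{lem:CL} to the clean recursion above: \eqref{eq:constraints} is an event of probability $1-\delta$ over the training sample, so naively conditioning on it could distort the quantity $a_{l-1}$ appearing on the right-hand side, and one must argue---leaning on the injected noise that renders each layer's output independent of its input trajectory---that the ``good'' event at layer $l$ may be treated independently of ${\bf y}_{l-1}$, so that the $(1-\delta)$ factor genuinely multiplies $a_{l-1}$ rather than merely scaling the slack. Once this bookkeeping is settled, the remainder is routine; the argument is the finite-horizon counterpart of the proof of \cref{thm:convergence}, the only difference being that here we keep the finite partial geometric sum instead of letting $L\to\infty$.
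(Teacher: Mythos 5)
Your proof is correct, and it reaches the stated bound by a genuinely more direct route than the paper. You and the paper derive the same per-layer recursion (the paper's \eqref{eq:bound1}, obtained via the total expectation theorem over the event that \eqref{eq:constraints} holds, with the $M$-bounded fallback on the complement) and both unroll it into $r^L\,\mathbb{E}\|\nabla f({\bf y}_0)\|$ plus a geometric sum with ratio $r=(1-\delta)(1-\epsilon)$. Where you diverge is in handling the accumulated slack: you simply bound $\sum_{j=0}^{L-1} r^j \le \frac{1}{1-r}$ and then use the algebraic estimate $1-r=\epsilon+\delta(1-\epsilon)\ge \epsilon(1-\delta)$ to obtain $\frac{(1-\delta)\zeta(N,\delta)+\delta M}{1-r}\le \frac{1}{\epsilon}\big(\zeta(N,\delta)+\frac{\delta M}{1-\delta}\big)$, which is exactly the stated residual. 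The paper instead takes a detour: it adds and subtracts $\lim_{l\to\infty}\mathbb{E}\big[\min_{k\le l}\|\nabla f({\bf y}_k)\|\big]$, applies the triangle inequality, and invokes Theorem \ref{thm:convergence} to bound one of the resulting terms, so its proof of Lemma \ref{thm:rate} logically depends on the supermartingale convergence argument. Your version is self-contained and elementary, at the cost of nothing --- the two bounds coincide. Your flagged concern about conditioning on the good event $E_l$ without distorting $a_{l-1}$ is legitimate, and you resolve it the same way the paper implicitly does (the injected noise ${\bf n}_l$ decouples each layer's step from the preceding trajectory), so there is no gap there.
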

\begin{proof}
    The proof can be found in Appendix \ref{app:rate}.
\end{proof}

\begin{remark} \cref{thm:convergence} provides convergence guarantees for unrolled networks that parallel those of stochastic gradient-based algorithms. The similarity lies in the fact that that both algorithms follow stochastic descending directions and keep visiting a small area around the optimum infintely often.
Although excursions away from this small area are possible and can
be arbitrarily large, existing studies, such as \cite{Eksin12}, show that the maximum value reached in these excursions is bounded above and adheres to an exponential probability bound. These findings can be directly expanded to unrolled networks that satisfy the descending constraints.
\end{remark}

\subsection{Robustness to Out-of-Distribution Shifts}\label{sec:robustness}

The significance of the descending constraints arises from the fact that the unrolled networks learn how to find the solution by following a descending direction, rather than learning the solution itself. The latter strategy fails against OOD shifts where the input $\bf x$ is drawn from a distribution $D_x'$ different from the original distribution $D_x$ used during training. These distribution shifts represent a change in the optimizee problem, resulting in a change in the distribution of the optimal solutions, which unrolled network cannot directly adapt to. The descending constraints, which are satisfied in expectation under the distribution $D_x$, can provide the unrolled networks with robustness against such distribution shifts, provided they remain satisfied under the new distribution $D_x'$. The following corollaries, which hold under Assumption \ref{A6}, prove that the descending nature of the layers can be transferred under distribution shifts and offer an upper bound for the generalization error. While these corollaries specifically handle the first constraints \eqref{eq:gradConst},the same results can be readily expanded to the second set of constraints \eqref{eq:distConst}.

\begin{assumption}\label{A6}
There exists a non-negative symmetric distance $d$ between the input distribution $D_x$ and the OOD distribution $D_x'$ such that
\begin{equation*}
   \Big| \mathbb{E}_{D_x} \big[ {\|{\nabla} f({\bf y}; {\bf x})\|} \Big]  - \mathbb{E}_{D_x'} \big[ {\|{\nabla} f({\bf y}; {\bf x})\|} \big] \Big| \leq M d(D_x, D_x')
\end{equation*}    
uniformly over ${\bf y}$ with $M$ being a Lipschitz constant.
\end{assumption}

This assumption is easily attainable. For instance, the Wasserstein distance $W_1(D_x, D_x')$ satisfies this assumption and provides a bounded distance between the two distributions $D_x$ and $D_x'$. Given an unrolled network trained on the distribution $D_x$, the following corollary evaluates the descending constraints when the network is executed under the distribution $D_x'$.

\begin{corollary}\label{cor:OOD}
Under Assumption \ref{A6}, an unrolled network ${\bf W}$ trained via \cref{alg:PD} over a distribution $D_x$ generates when executed on an OOD distribution $D_x'$ a sequence of layers' outputs $\{{\bf y}_l\}_l$  that satisfies
\begin{equation}
\begin{split}
    \mathbb{E}_{D_x'} \big[ \| {\nabla} f({\bf y}_{l};{\bf x})\|_2 \ - & \ (1-\epsilon) \ \|  {\nabla} f({\bf y}_{l-1};{\bf x}) \|_2 \big] \\
    & \leq \zeta(N, \delta) + 2M d(D_x, D_x'),
\end{split}
\end{equation}   
with probability $1-\delta$ each.
\end{corollary}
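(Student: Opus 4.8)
The plan is to start from what Theorem 1 (CLT) already gives us on the training distribution $D_x$, namely that the near-optimal solution returned by Algorithm 1 satisfies $\mathbb{E}_{D_x}[\|\nabla f({\bf y}_l;{\bf x})\|_2 - (1-\epsilon)\|\nabla f({\bf y}_{l-1};{\bf x})\|_2] \leq \zeta(N,\delta)$ with probability $1-\delta$, via the constraint bound \eqref{eq:constraints} specialized to the loss ${\cal C}$ of \eqref{eq:gradConst}. The goal is then to bound the same expression under $D_x'$. First I would write the left-hand side evaluated under $D_x'$ as the left-hand side under $D_x$ plus the difference $\big(\mathbb{E}_{D_x'}-\mathbb{E}_{D_x}\big)[\|\nabla f({\bf y}_l;{\bf x})\|_2 - (1-\epsilon)\|\nabla f({\bf y}_{l-1};{\bf x})\|_2]$, and control that difference term by term.

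The key step is to apply Assumption \ref{A6} to each of the two gradient-norm terms. For the term $\|\nabla f({\bf y}_l;{\bf x})\|_2$, Assumption \ref{A6} directly gives $\big|\mathbb{E}_{D_x'}[\|\nabla f({\bf y}_l;{\bf x})\|] - \mathbb{E}_{D_x}[\|\nabla f({\bf y}_l;{\bf x})\|]\big| \leq M\,d(D_x,D_x')$, uniformly over ${\bf y}_l$; similarly for $\|\nabla f({\bf y}_{l-1};{\bf x})\|_2$. Combining, and using $(1-\epsilon)\in(0,1)$ so that the coefficient on the second term is at most $1$, the triangle inequality yields
\begin{equation*}
\Big|\big(\mathbb{E}_{D_x'}-\mathbb{E}_{D_x}\big)\big[\|\nabla f({\bf y}_{l})\|_2 - (1-\epsilon)\|\nabla f({\bf y}_{l-1})\|_2\big]\Big| \leq M\,d(D_x,D_x') + (1-\epsilon)\,M\,d(D_x,D_x') \leq 2M\,d(D_x,D_x').
\end{equation*}
Adding this to the $D_x$-bound $\zeta(N,\delta)$ from Theorem 1 gives exactly the claimed bound $\zeta(N,\delta) + 2M\,d(D_x,D_x')$, with the probability $1-\delta$ inherited from Theorem 1 (per layer $l$, hence "with probability $1-\delta$ each").

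One subtlety I would be careful about is that the outputs ${\bf y}_l$ are themselves random and, under the shifted execution, their distribution changes as well; this is precisely why Assumption \ref{A6} is stated \emph{uniformly over ${\bf y}$}, so that the bound on the difference of expectations of $\|\nabla f(\cdot;{\bf x})\|$ holds regardless of which (random) iterate ${\bf y}_l$ we plug in — we can therefore take expectations over the iterate distribution afterward without losing the bound. I do not expect a genuine obstacle here: the argument is essentially a two-term triangle-inequality decomposition layered on top of Theorem 1, and the only thing to state cleanly is the uniform-in-${\bf y}$ application of Assumption \ref{A6} together with the bookkeeping of the $1-\delta$ probability coming from the CLT guarantee rather than from anything about $D_x'$.
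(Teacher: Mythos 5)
Your proposal is correct and follows essentially the same route as the paper's proof in Appendix C: add and subtract the $D_x$ expectations, bound the two distribution-difference terms via Assumption \ref{A6} (using $(1-\epsilon)\le 1$ to get the factor $2M\,d(D_x,D_x')$), and invoke the constraint bound \eqref{eq:constraints} with its $1-\delta$ probability for the remaining term. Your added remark on the uniform-in-${\bf y}$ application of Assumption \ref{A6} is a point the paper leaves implicit, but the argument is otherwise identical.
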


\begin{proof}
    The proof can be found in Appendix \ref{app:OOD}.
\end{proof}

\cref{cor:OOD} indicates that the descending constraints are satisfied under an OOD distribution $D_x'$ up to a generalization error that depends on the distance between the two distributions $D_x$ and $D_x'$. 
Even under distribution shifts, the unrolled network still finds (noisy) descending directions toward the optimum, despite not being trained on the solutions of the new optimizee problems. This capability is attributed to the proposed descending constraints and is unique to the unrolled networks trained with such constraints. 

Since the descending constraints are satisfied with a marginal error and with probability $1-\delta$, convergence guarantees for the unrolled network can be derived under the new distribution similarly to \cref{thm:convergence}. The following corollary summarizes this result.

\begin{corollary} 
The sequence of layers' outputs $\{{\bf y}_l\}_l$ generated according to \cref{cor:OOD} also satisfies
\begin{equation}
\begin{split}
    \lim_{l \rightarrow \infty} \mathbb{E}_{D_x'}  \Big[ &  \min_{k\leq l} \|{\nabla} f({\bf y}_{k}; {\bf x})\| \Big] \\
    & \leq \frac{1}{\epsilon} \Big(\zeta(N, \delta) + 2Md(D_x, D_x') +
     \frac{\delta M}{1-\delta}\Big).
\end{split}
\end{equation}    
\end{corollary}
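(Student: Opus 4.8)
The plan is to mirror the proof of \cref{thm:convergence} almost verbatim, replacing the in-distribution constraint bound $\zeta(N,\delta)$ by the OOD bound from \cref{cor:OOD}, namely $\zeta(N,\delta) + 2Md(D_x,D_x')$. Concretely, \cref{cor:OOD} tells us that, when the network trained via \cref{alg:PD} is executed on $D_x'$, the quantity $\mathbb{E}_{D_x'}[\|\nabla f({\bf y}_l;{\bf x})\|_2 - (1-\epsilon)\|\nabla f({\bf y}_{l-1};{\bf x})\|_2]$ is at most $\zeta(N,\delta) + 2Md(D_x,D_x')$ with probability $1-\delta$ for each $l$. This is structurally identical to the hypothesis of \cref{thm:convergence} (which assumes \eqref{eq:constraints} holds with probability $1-\delta$), with the effective per-layer slack enlarged from $\zeta$ to $\zeta + 2Md$.

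First I would set $\tilde\zeta := \zeta(N,\delta) + 2Md(D_x,D_x')$ and observe that the sequence $\{{\bf y}_l\}$ generated under $D_x'$ satisfies, for each $l$ with probability $1-\delta$, $\mathbb{E}_{D_x'}[g_l] \leq (1-\epsilon)\,\mathbb{E}_{D_x'}[g_{l-1}] + \tilde\zeta$ where $g_l := \|\nabla f({\bf y}_l;{\bf x})\|$. Next I would reproduce whatever device the proof of \cref{thm:convergence} uses to handle the $(1-\delta)^L$-type issue — presumably a conditioning argument that on the event where the constraint fails (probability $\delta$) one bounds $g_l$ crudely by the Lipschitz constant $M$ (using Assumption \ref{A1} and compactness of $\mathcal{Y}$), yielding an unconditional recursion of the form $\mathbb{E}_{D_x'}[g_l] \leq (1-\delta)(1-\epsilon)\,\mathbb{E}_{D_x'}[g_{l-1}] + \tilde\zeta + \delta M$ or a min-type supermartingale bound. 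Then the same telescoping/limiting argument that produces $\frac{1}{\epsilon}(\zeta + \frac{\delta M}{1-\delta})$ in \cref{thm:convergence} produces here $\frac{1}{\epsilon}(\tilde\zeta + \frac{\delta M}{1-\delta}) = \frac{1}{\epsilon}(\zeta(N,\delta) + 2Md(D_x,D_x') + \frac{\delta M}{1-\delta})$, which is exactly the claimed bound.

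The cleanest way to write this is not to redo the analysis but to invoke \cref{thm:convergence} as a black box: its conclusion depends on the input sequence only through the fact that it satisfies a per-layer descent inequality with slack $\zeta$ and confidence $1-\delta$, plus Assumption \ref{A1}. Since \cref{cor:OOD} certifies precisely such an inequality with slack $\tilde\zeta$, applying \cref{thm:convergence} with $\zeta$ replaced by $\tilde\zeta$ (and all expectations taken under $D_x'$) immediately gives the result. I would phrase the short proof as: "The sequence $\{{\bf y}_l\}_l$ of \cref{cor:OOD} satisfies \eqref{eq:constraints} under $D_x'$ with the right-hand side $\zeta(N,\delta)$ replaced by $\zeta(N,\delta)+2Md(D_x,D_x')$; the claim then follows by the argument of \cref{thm:convergence} applied verbatim with $D_x'$ in place of $D_x$."

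The main obstacle is essentially bookkeeping rather than mathematics: one must make sure that the proof of \cref{thm:convergence} genuinely uses the constraint only through the slacked recursion and the Lipschitz/boundedness assumption, so that swapping $\zeta \mapsto \zeta + 2Md$ and $D_x \mapsto D_x'$ is legitimate everywhere — in particular that the event-splitting on $\{$constraint holds$\}$ vs. its complement, and the a.s.\ limiting argument (e.g.\ an application of a supermartingale convergence theorem or a Robbins–Siegmund-type lemma), go through unchanged under $D_x'$. No step of that proof should rely on any special structure of $D_x$ beyond Assumptions \ref{A1}–\ref{A5}, and Assumption \ref{A6} is exactly what licenses transporting the constraint satisfaction to $D_x'$; so the extension is routine, and the corollary can be stated with a one-paragraph proof that points to \cref{cor:OOD} and \cref{thm:convergence}.
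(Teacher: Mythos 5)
Your proposal is correct and matches the paper's treatment: the paper proves this corollary simply by remarking that it ``follows directly from'' \cref{thm:convergence}, i.e., by rerunning that theorem's supermartingale argument with the per-layer slack $\zeta(N,\delta)$ enlarged to $\zeta(N,\delta)+2Md(D_x,D_x')$ as certified by \cref{cor:OOD}, and with all expectations taken under $D_x'$. Your black-box invocation of \cref{thm:convergence} with $\tilde\zeta := \zeta(N,\delta)+2Md(D_x,D_x')$ is exactly this argument, made slightly more explicit.
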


This corollary follows directly from \cref{thm:convergence}. 
It states that, under distribution shifts, the unrolled network converges eventually to a region around the optimal solution where the gradient norm is below a factor that includes the distance between the two distributions. The smaller this distance, the smaller the near-optimal region to which the network converges.

In the following two sections, we consider two use cases, namely sparse coding and inverse problems, where algorithm unrolling achieves superior performance. We aim to show the impact of our proposed descending constraints on the performance and robustness of these two use cases.

\section{Case Study I: LISTA for Sparse Coding}\label{sec:sparse}
In our first case study, we tackle the sparse coding problem, which is of enduring interest in signal processing. In this section, we provide a brief description of the problem and its iterative solution. We also show the mechanism that has been previously proposed to unroll this iterative algorithm and then provide discussions, supported by numerical simulations, to evaluate the proposed descending constraints.
\subsection{Problem Formulation} 
Sparse coding refers to the problem of finding a sparse representation of an input signal ${\bf x} \in \mathbb{R}^p$ using an overcomplete dictionary ${\bf D}\in \mathbb{R}^{p\times d}$ with $d>p$. The goal is to find a sparse code ${\bf y}\in \mathbb{R}^d$ that satisfies ${\bf x} \approx \bf{Dy}$.
This optimizee problem can be cast as a LASSO regression problem
\begin{equation}\label{eq:sparse_coding}
    \begin{split}
        \min_{{\bf y}} \quad & f_{sp}({\bf y}; {\bf x}) := \frac{1}{2} \|{\bf x} - {\bf Dy}\|_2^2 + \alpha \|{\bf y}\|_1,
    \end{split}
\end{equation}
where $\alpha > 0$ is a regularization parameter. The $\ell_1$-norm is used as a regularization term to control the sparsity of the solution. 

A widely-used method for solving \eqref{eq:sparse_coding} is the iterative shrinkage and thresholding algorithm (ISTA) \cite{ISTA2004, beck_fast_2009}. At each iteration $k$, ISTA updates the $k$-th iteration of the solution, denoted by ${\bf y}_k$, according to the following rule:
\begin{equation}\label{eq:ISTA}
    {\bf y}_{k} = {\cal S}_{\alpha/\nu} \left({\bf y}_{k-1} - \frac{1}{\nu} {\bf D}^\top ({\bf Dy}_{k-1} -{\bf x})\right),
\end{equation}
where $\nu$ is a parameter whose value is larger than the largest eigenvalue of ${\bf D}^\top {\bf D}$, and ${\cal S}_{\alpha/\nu}$ is the soft-thresholding operator,
\begin{equation}
    {\cal S}_{\alpha/\nu}({\bf y}) = \text{sign}({\bf y}) \cdot \max \{ |{\bf y}|-{\alpha/\nu}, 0\},
\end{equation}
which is evaluated elementwise.
The update rule in \eqref{eq:ISTA} performs a proximal gradient descent step, which is equivalent to a gradient step in the direction of $-\nabla \| {\bf x} - {\bf Dy}\|_2^2/2$ followed by a projection onto the $\ell_1$-norm ball. This projection ensures that the new value ${\bf y}_k$ is within the feasible set of the optimizee problem.

\begin{figure*}
    \centering
    \includegraphics[width=0.7\textwidth]{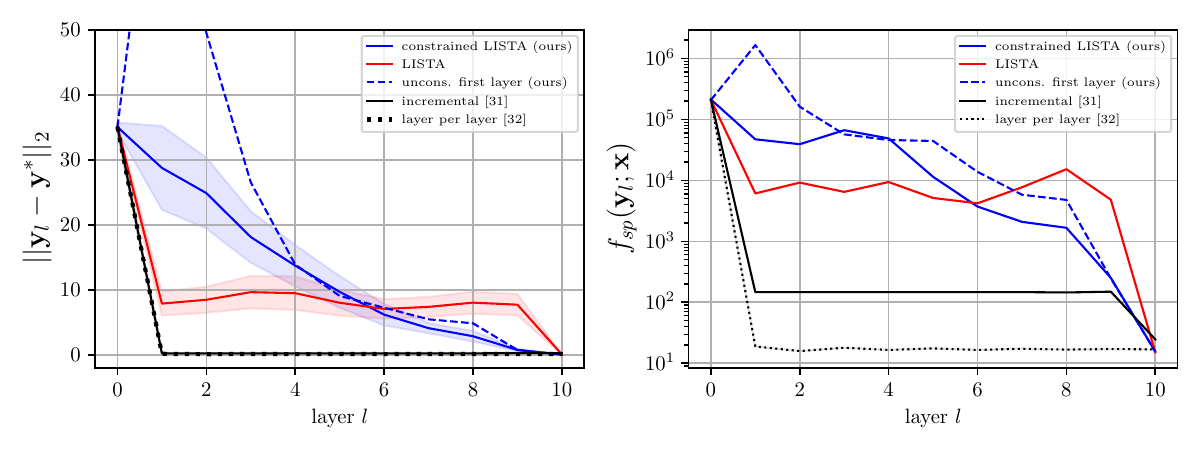}
    \caption{Distance to the optimal solution ${\bf y}^*$ and the value of the objective function $f_{sp}({\bf y}_l; {\bf x})$ across the ten unrolled layers of constrained LISTA (blue), LISTA (red), incremental training (black), and layer-per-layer training (black dots).  
    Constrained LISTA makes gradual progressions toward the optimum, unlike the other approaches, due to the implemented descending constraints during training.}
    \label{fig:distance_LISTA}
\end{figure*}

\subsection{LISTA: Unrolled Solution} 
The seminal work in \cite{gregor_learning_2010} has unrolled the update rule of ISTA and introduced Learnable ISTA (LISTA) for sparse coding contexts. This work has re-identified the iteration in \eqref{eq:ISTA}
as a linear mapping of ${\bf y}$ followed by a nonlinear activation function (i.e., a soft-thresholding function), i.e.,
\begin{equation}\label{eq:LISTA}
    {\bf y}_l = {\cal S}_{\boldsymbol{\beta}^l} \left( {\bf D}^l_u {\bf x} + {\bf D}^l_e {\bf y}_{l-1} \right),
\end{equation}
with ${\bf D}^l_u \in \mathbb{R}^{d \times p}, {\bf D}_e^l \in \mathbb{R}^{d \times d}$ and $\boldsymbol{\beta}^l \in \mathbb{R}^d$ being learnable parameters. 
Cascading $L$ of these layers in an unrolled architecture is then equivalent to executing ISTA  for $L$ iterations.
This unrolled architecture has two notable features. First, ${\bf D}^l_u$ represents a residual (i.e., skip) connection from the input ${\bf x}$ to layer $l$, which is reminiscent of ResNets. Second, the network employs a parametric nonlinear activation function, which has been a recent trend in designing neural networks \cite{chung16, kiliccarslan2021rsigelu, varshney2021optimizing}.

As can be observed by comparing \eqref{eq:LISTA} to \eqref{eq:ISTA}, LISTA has decoupled its parameters by making the following substitutions: ${\bf D}_u = (1/L){\bf D}^\top$ and ${\bf D}_e = {\bf I} - (1/L){\bf D}^\top {\bf D}$. 
However, the decoupling of ${\bf D}_u$ and ${\bf D}_e$ was proven to have a negative impact on the convergence of LISTA \cite{Chen18theoretical}.

\subsection{Constrained LISTA}
In our approach, we adopt the same structure of LISTA in \eqref{eq:LISTA}, but we perform a constrained learning-based training procedure according to \eqref{eq:constrainedUO}. 
As in \cite{gregor_learning_2010}, the unrolled network is trained by minimizing the mean square error (MSE);
\begin{equation}\label{eq:cLISTA_training}
    \begin{split}
        \min_{{\bf W}} \quad &  \mathbb{E} \big[\|\boldsymbol{\Phi}({\bf x};{\bf W}) - {\bf y}^*\|_2^2 \big].
    \quad \text{s.t.} \quad \text{\eqref{eq:distConst}}.
    \end{split}
\end{equation}
Relying on the constraints \eqref{eq:distConst} is a deliberate choice that fits the nature of the optimizee problem \eqref{eq:sparse_coding}. The optimizee is originally a constrained problem, where the goal is to find a point within the $\ell_1$-norm ball that minimizes the objective function. Following the gradients of the $\ell_2$-norm would not guarantee convergence to a feasible solution unless we add explicit $\ell_1$-norm constraints on the outputs of the unrolled layers.

\subsection{Numerical Results and Discussions}
In our experiments, we evaluate the performance of the proposed constrained-learning approach on grayscale CIFAR10 dataset. Each image has $32 \times 32$ pixels and is flattened into a 1024-dimensional vector.
To fully identify the optimizee problem, using $d=1228$, we construct the dictionary $\bf D$ as a random matrix with each element independently drawn from a Gaussian distribution ${\cal N}(0, 1)$ and we set $\alpha = 0.5$. Under this set-up, we evaluate the sparse vector ${\bf y}^* \in \mathbb{R}^{1228}$ for each image ${\bf x} \in \mathbb{R}^{1024}$ by executing ISTA for $10$ thousands iterations and construct a dataset $\{ ({\bf x}_i, {\bf y}_i^*)\}$ to train the unrolled LISTA. The dataset consists of $32 \times 10^3$ training, $8\times 10^3$ validation and $10^4$ test examples.

The unrolled LISTA consists of $L=10$ layers. We take the initial estimate ${\bf y}_0$ to be a random vector drawn from ${\cal N}(0, {\bf I})$, which is then fed to the unrolled network along with the input images. During training, the layers' outputs are contaminated with additive Gaussian noise with zero mean and a variance of $\sigma_l^2 = \hat{\sigma}^2g_l^2$, where $g_l$ is the gradient at layer $l$ and $\hat{\sigma} = 1$. We train the unrolled model for $30$ epochs using ADAM with a learning rate $\mu_w = 10^{-5}$ and a dual learning rate $\mu_\lambda = 10^{-3}$. We also set the constraint parameter $\epsilon$ to $0.05$.\footnote{The code is available at: \href{https://github.com/SMRhadou/RobustUnrolling}{https://github.com/SMRhadou/RobustUnrolling}.}

\begin{figure*}[t]
    \centering
    \includegraphics[width=\textwidth]{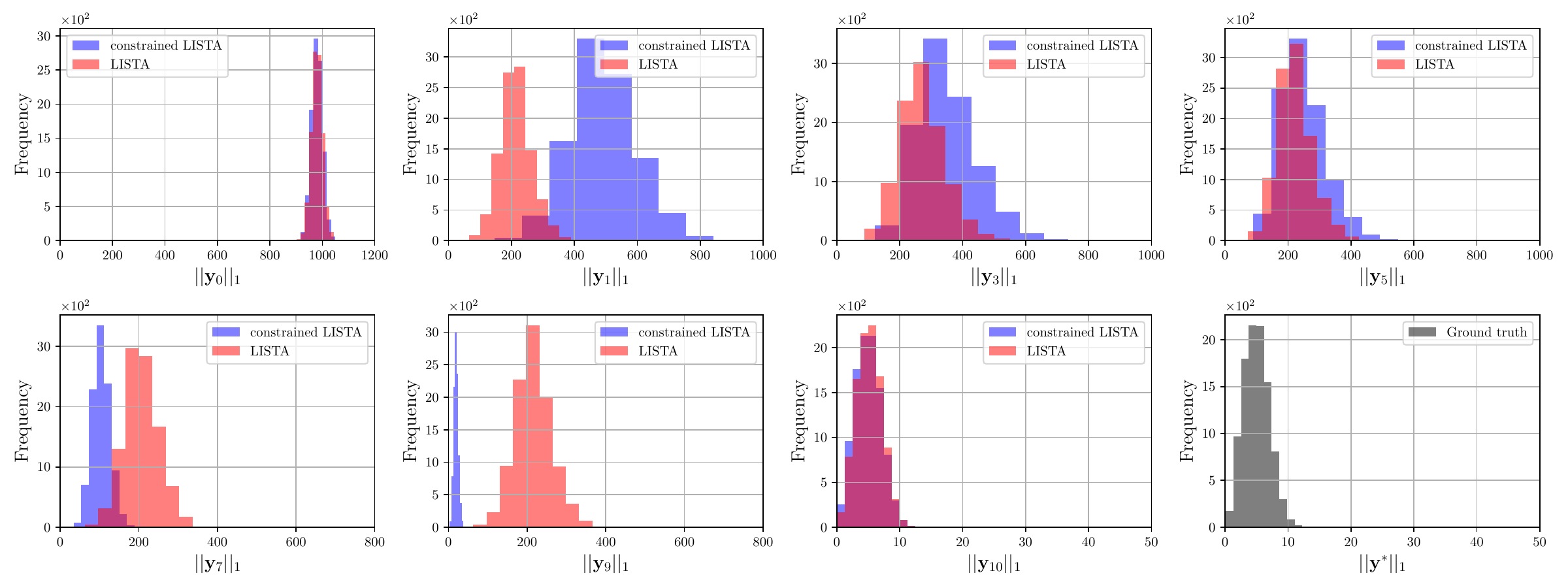}
    \caption{Histogram of the $\ell_1$-norm of the unrolled layers' outputs at (from top left to bottom right) the input, $1^{st}$, $3^{th}$, $5^{th}$, $7^{th}$, $9^{th}$ and $10^{th}$ layers along with the ground truth ${\bf y}^*$. The histogram stays the same across all the intermediate layers in the case of LISTA, while it moves to the left---representing lower values of $\ell_1$-norm---under the descending constraints.}
    \label{fig:hist}
\end{figure*}

\textbf{Performance.}
\cref{fig:distance_LISTA} compares the performance of LISTA and constrained LISTA on a test dataset. Both models achieve equivalent performance at their outputs, illustrated by the zero distance achieved at the last layer in both cases. However, their behaviors over the intermediate layers differ significantly from each other. For standard LISTA, the distance to the optimal is almost the same across all these layers, which suggests that there was no significant improvement in their estimates. This is further confirmed by examining the value of the objective function $f_{sp}({\bf y}, {\bf x})$ in \cref{fig:distance_LISTA} (right), which shows negligible change across the intermediate layers. A similar pattern is observed for other training procedures, such as incremental training \cite{Ito19} and layer-per-layer training \cite{Sabulal20}. In contrast, constrained LISTA demonstrates a gradual decrease in the distance to the optimum and the objective function because of the imposed constraints. 

\begin{figure*}
    \centering
    \includegraphics[width=0.7\textwidth]{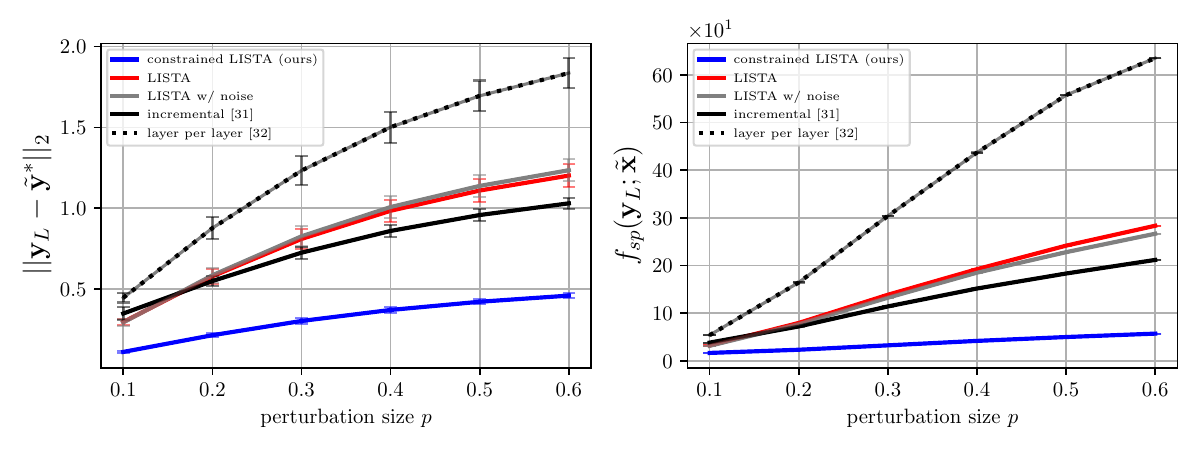}
    \caption{OOD robustness against data shifts of the form $\tilde{\bf x}  \sim {\cal N}({\bf x}, p^2{\bf I})$. Distance to the optimal solution $\tilde{\bf y}^*$ and the value of the objective function $f_{sp}({\bf y}_l)$ across the ten unrolled layers of constrained LISTA (blue), LISTA (red), LISTA trained with noisy inputs (gray), and other benchmarks (black). }
    \label{fig:OOD_LISTA}
\end{figure*}

To elaborate more on this difference, we show the histogram of the $\ell_1$-norm of the estimates in \cref{fig:hist}. As shown in the figure, the $\ell_1$-norm of the estimates barely changes between the first and ninth LISTA's layers (shown in red). This suggests that LISTA overlooks the sparsity requirement of the solution altogether, which is aligned with the fact that this requirement is not represented in the MSE training loss that is used in LISTA. Meanwhile, constrained LISTA uses the constraints \eqref{eq:distConst} to provoke this requirement through all the layers. Therefore, we witness in \cref{fig:hist} that the histogram of constrained LISTA is moving to the left over all the layers, implying a reduction in the $\ell_1$-norm at each and every layer.

These observations suggest that standard LISTA can find the solution in just two layers, where a significant reduction in the distance to the optimum is observed. In contrast, constrained LISTA requires more layers for a gradual reduction to occur. In order to examine whether we can combine the advantages of the two models--namely, achieving substantial progress in the first layer and then allowing the network to descend--we train LISTA with constraints starting from the second layer, leaving the first layer unconstrained. \cref{fig:distance_LISTA} shows that the model prioritizes learning a descending path over the constrained layers. However, unlike standard unrolling, the unconstrained first layer follows an ascending direction, where both the distance to the optimum and the the value of $f_{sp}$ increase. This implies that the direction predicted by an unconstrained layer is largely arbitrary.

\textbf{Robustness.} To evaluate the impact of the descending constraints, we test the unrolled network on an OOD distribution. The OOD dataset $\{(\tilde{\bf x}_i, \tilde{\bf y}_i^*)\}_i$ is constructed by adding perturbations to the input data ${\bf x} \sim D_x$ and then running ISTA from scratch to find the optimal sparse representations of the new signals $\tilde{\bf x}$. The perturbations are randomly drawn from a Gaussian distribution ${\cal N}(0, p^2 {\bf I})$ and the larger the level of perturbations $p$, the larger the distance $d(D_x, D_x')$. \cref{fig:OOD_LISTA} shows the robustness of constrained LISTA against the distribution shift, as indicated by the slow increase in the distance to the optimal as the perturbation size increases. In contrast, standard LISTA and the other benchmarks experience a higher distance to the optimum and a higher value of the objective function for all values of $p$. Additionally, the gap between the two models widen significantly with the increase in perturbation size. 

\begin{figure*}
    \centering
    \includegraphics[width=\textwidth]{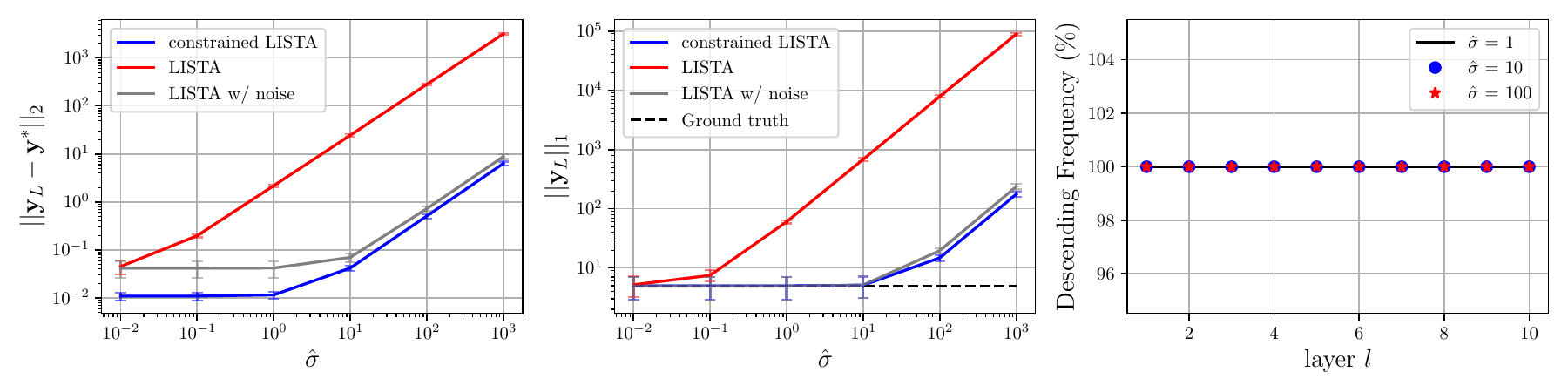}
    \caption{Robustness against additive noise with a standard deviation proportional to the gradient by a factor $\hat \sigma$ (training is held at $\hat \sigma=1$). (Left) Distance to the optimal ${\bf y}^*$, (middle) $\ell_1$-norm of the unrolled network's output, and (right) Frequency of satisfying the constraints across the test dataset for different perturbation sizes.}
    \label{fig:robust_LISTA}
\end{figure*}

To ensure a fair comparison, we also train standard LISTA with the same additive noise schedule introduced in Section \ref{sec:descend_constraint}. Although adding perturbations during training might suggest enhanced robustness of the model, \cref{fig:OOD_LISTA} shows that incorporating noise during training does not significantly improve the generalizability of standard LISTA. 
Furthermore, \cref{fig:robust_LISTA} illustrates the performance of both LISTA and constrained LISTA when subjected to additive perturbations at each unrolled layer during execution, similar to the noise introduced during training but with different magnitudes. 
The figure confirms that training the models with noise does confer some robustness toward this type of perturbations. However, noisy training alone in insufficient to ensure robustness toward distribution shifts in the input data.
This finding confirms that the OOD robustness observed in constrained LISTA, as shown in \cref{fig:OOD_LISTA}, is entirely attributed to the proposed descending constraints and the convergence guarantees they provide, even under distribution shifts.


\section{Case Study II: \\ Generative Flow Models for Inpainting Problems}\label{sec:inpainting}
For our second case study, we present the inpainting problem, which is one of the challenging inverse problems. Following the same structure of the previous section, we first define the inpainting problem, followed by a description of one method to unroll its solution with a generative model. We then discuss our constrained version of this solution and its robustness with the help of numerical simulations.

\subsection{Problem Formulation}
The inpainting problem is to predict image $\bf y$ from a measurement ${\bf x} = {\bf Ay}$, where $\bf A$ is a matrix masking the center elements of image $\bf y$. One common approach to solve this problem is through maximum a posteriori (MAP) inference, i.e., solving the optimization problem $\arg \max_{\bf y} p({\bf x}|{\bf y}) p_{\mu}({\bf y})$. The likelihood $p({\bf x}|{\bf y})$ in our case is a degenerate distribution that we approximate using a narrow Gaussian distribution ${\cal N}({\bf Ay}, \sigma_n^2{\bf I})$. Under this distribution, the problem reduces to the (negative log posterior) problem
\begin{equation}\label{eq:MAPinpainting}
     \begin{split}
        \arg \min_{{\bf y}} \quad & \frac{1}{2 \sigma_n^2} \|{\bf x} - {\bf Ay}\|_2^2 - \log p_{\mu}({\bf y}).
    \end{split}
\end{equation}
It is often the case that the prior distribution $p_{\mu}({\bf y})$ is carefully chosen based on domain knowledge or modeled using a learnable generative model.

In our case study, we model the prior using the generative flow (GLOW) models \cite{kingma2018glow}. GLOW transforms between a standard normal distribution $p({\bf z}) \sim {\cal N}(0, {\bf I})$ of a latent variable $\bf z$ and a more complex distribution using a series of \emph{composable}, \emph{bijective}, and \emph{differentiable} mappings (or layers). These functions map each image to its latent representation, i.e., 
\begin{equation}
    \begin{split}
        {\bf z} = h_{\mu}({\bf y}) = (h_1 \circ \dots \circ h_i)({\bf y}).
    \end{split}
\end{equation}
where $\circ$ denotes a composition of functions and $\mu$ are learnable parameters.
The inverse mappings, on the other hand, generate an image from a point in the latent space, i.e.,
\begin{equation}
    \begin{split}
        {\bf y} = g_{\mu}({\bf z}) = (h_i^{-1} \circ \dots \circ h_1^{-1})({\bf z}).
    \end{split}
\end{equation}
Each mapping/layer $h_*$ consists of actnorm, followed by an invertible $1 \times 1$ convolution, followed by a coupling layer. We refer the reader to Table 1 in \cite{kingma2018glow} for a precise description of these operations. 

Having these mappings learned leads to an exact evaluation of the prior distribution $p_\mu({\bf y})$ in terms of the latent distribution \cite{asim2020invertible, whang2021solving, Wei22}. Under the assumption that the latter is a standard normal distribution, the MAP problem in \eqref{eq:MAPinpainting} can be transferred to the latent space leading to the problem
\begin{equation}\label{eq:inpainting}
     \begin{split}
        \arg \min_{{\bf z}} \quad &  f_{in}({\bf z}; {\bf x}) := \|{\bf x} - {\bf A} g_{\mu}({\bf z})\|_2^2 + \lambda \|{\bf z}\|_2^2,
    \end{split}
\end{equation}
where $\lambda$ is a regularization parameter balancing the data consistency and the prior. Finding the optimal of \eqref{eq:inpainting} requires the generative model $g_{\mu}$ to be known apriori. This implies that learning the parameters of $g_{\mu}$ is executed separately before \eqref{eq:inpainting} is solved. We denote the data consistency term $\|{\bf x} - {\bf A} {\bf y}\|_2^2$ by $f_1({\bf y})$ and the regularization term $\|{\bf z}\|_2^2$ by $f_2({\bf z})$.

In this case study, we adopt the work in \cite{Wei22}, which takes an alternative path in solving \eqref{eq:inpainting} through unrolling an iterative proximal-like algorithm.

\subsection{Unrolled GLOW-Prox}
A solution for \eqref{eq:inpainting} can be reached using a proximal-like algorithm that alternates between taking a gradient step in the direction of data consistency and projecting the solution to the proximity of the prior. 
The work in \cite{Wei22}, which we refer to as unrolled GLOW-Prox, sets the parameters of this algorithm free to learn. More specifically, the $l$-th unrolled layer first updates the estimate using a gradient step,
\begin{equation}\label{eq:UGLOW1}
    \tilde{\bf y}_{l} = {\bf y}_{l-1} - \xi_{l} {\bf A}^\top ({\bf x} - {\bf Ay}_{l-1}),
\end{equation}
where $\xi_{l}$ is a learnable step size and the initial estimate ${\bf y}_0$ is set to be the masked image ${\bf x}$. This estimate is then converted to the latent space using a learnable GLOW network,
\begin{equation}
    \tilde{\bf z}_l = h_{\mu}^{(l)}(\tilde{\bf y}_l).
\end{equation}
This conversion is carried out so that a proximal update of the latent variable is executed next,
\begin{equation}\label{eq:UGLOW3}
    {\bf z}_l = \frac{\tilde{\bf z}_l}{1 + \zeta_{l}},
\end{equation}
where $\zeta_{l}$ is a learnable shrinkage parameter. The proximal update ensures that the estimate has a higher likelihood--equivalent to a lower $\ell_2$ norm since $\bf z$ follows a standard normal distribution--by bringing ${\bf z}_l$ closer to the origin. Finally, we convert back to the signal space using the inverse map $g_{\mu}^{(l)}$,
\begin{equation}\label{eq:UGLOW4}
    {\bf y}_l = g_{\mu}^{(l)}({\bf z}_l).
\end{equation}
Hence, the estimate ${\bf y}_l$ is a modified version of $\tilde{\bf y}_l$ that enhances the likelihood of the latent space.
The learnable parameters at each layer are the scalars $\xi_l$ and $\zeta_l$ and a full GLOW network $h_\mu^{(l)}$. Equations \eqref{eq:UGLOW1}-\eqref{eq:UGLOW4} are then repeated for $L$ layers and the whole unrolled network is trained end-to-end to generate a final estimate ${\bf y}_L = \boldsymbol{\Phi}({\bf x};{\bf W})$. 

\subsection{Constrained GLOW-Prox}
Unlike \cite{Wei22}, we perform a constrained training procedure according to \eqref{eq:constrainedUO}. The unrolled network is trained by minimizing the mean square error between the output of unrolled GLOW-Prox and the true image, i.e.,
\begin{equation}\label{eq:GLOW_training}
    \begin{split}
        \min_{{\bf W}} \quad &  \mathbb{E} \big[\|\boldsymbol{\Phi}({\bf x};{\bf W}) - {\bf y}\|_2^2 \big] \\
    \text{s.t.} \quad & \mathbb{E} \big[ \| {\nabla}_y f_1({\bf y}_{l})\|_2  -  (1-\epsilon) \ \|  {\nabla}_y f_1({\bf y}_{l-1}) \|_2 \big] \leq 0 \\
    & \mathbb{E} \big[ \| {\nabla}_z f_2({\bf z}_l)\|_2  -  (1-\epsilon) \ \|  {\nabla}_z f_2({\bf z}_{l-1}) \|_2 \big] \leq 0,
    \end{split}
\end{equation}
where $f_1$ and $f_2$ are the data consistency and the regularization terms specified in \eqref{eq:inpainting}.
Here, we use two sets of the gradient constraints \eqref{eq:gradConst} as we separate the variables ${\bf y}$ and ${\bf z}$ when computing the gradients, following the separation of their update rules in \eqref{eq:UGLOW1} and \eqref{eq:UGLOW3}. In the first set of constraints, we force the gradient of the data consistency term $f_1({\bf y})$ with respect to ${\bf y}$ to descend, thereby ensuring that the update rule of ${\bf z}$ in \eqref{eq:UGLOW3} does not drift the estimate ${\bf y}_l$ away from $\tilde{\bf y}_l$. 
In other words, $\tilde{\bf y}_l$ is, by definition, chosen in the direction of the gradient while ${\bf y}_l$ is forced to be in a descending direction using the imposed constraints. 
In the second set, we consider the gradients of the regularization term $f_2({\bf z})$ with respect to the latent variable ${\bf z}$ to ensure that its likelihood is increasing over the layers. The training procedure follows Algorithm \ref{alg:PD} except that two sets of dual variables are considered to accompany the two sets of constraints.
\begin{figure*}[t]
    \centering
    \includegraphics[width=\textwidth]
    {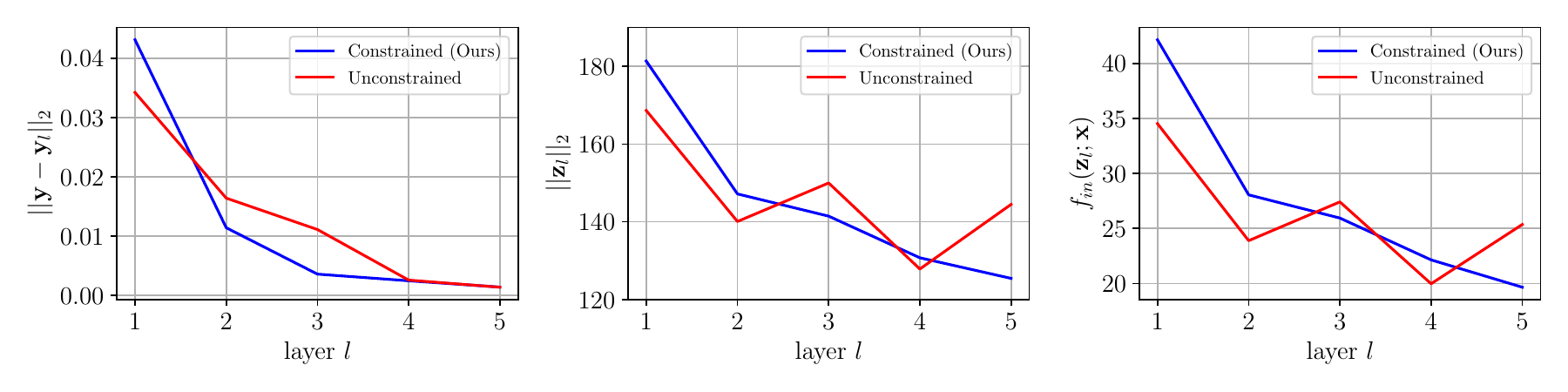}
    \caption{Comparison between constrained and unconstrained GLOW-Prox. (Left) Distance to the original image $\bf y$, averaged over the test dataset. (Middle) The $\ell_2$-norm of the latent variable ${\bf z}_l$, representing the negative log-likelihood. (Right) The value of the inpainting objective function $f_{in}(\cdot; {\bf x})$.}
    \label{fig:performace_GLOW}
\end{figure*}

\begin{figure*}[t]
    \centering
    \includegraphics[width=0.495\textwidth]
    {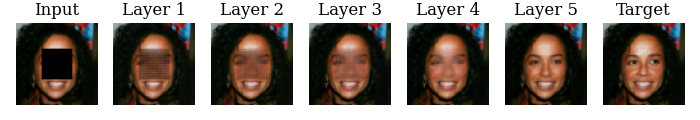}
    \includegraphics[width=0.495\textwidth]
    {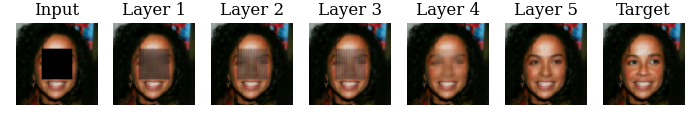}
    \includegraphics[width=0.495\textwidth]
    {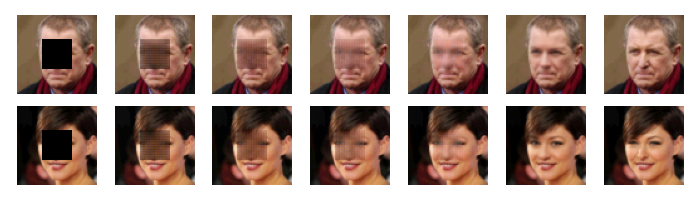}
    \includegraphics[width=0.495\textwidth]
    {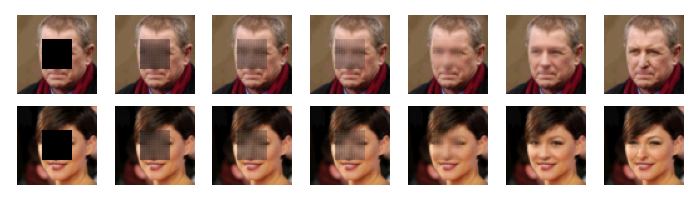}
    \caption{Test examples of inpainted images ${\bf y}_l$ across the unrolled layers of (left) constrained and (right) unconstrained GLOW-Prox. The performance of the two appears to be almost identical. The initial estimate ${\bf y}_0$ is set to be the masked image $\bf x$.}
    \label{fig:examples_GLOW}
\end{figure*}

\subsection{Numerical Results and Discussions}
We assess the proposed constrained-learning approach on the CelebA-HQ dataset \cite{karras2017progressive}. Each entry in the dataset is an RGB image with $64 \times 64$ pixels per color channel.
To construct the corrupted images, we mask the center $24 \times 24$ pixels, and the goal is to inpaint the masked pixels to match the clean images. To achieve this, the unrolled GLOW-Prox is trained using pairs of corrupted and clean images $\{({\bf x}_i, {\bf y}_i)\}_i$. The dataset consists of $17536$ training, $976$  validation and $976$  test examples.

The unrolled GLOW-Prox consists of $L=5$ layers. Each layer contains a GLOW network that has a depth of flow of $18$ and $4$ multi-scale levels \cite{kingma2018glow}. We refer the reader to \cite[Fig. 1]{Wei22} for a visualization of the network. 
We train the unrolled model to minimize \eqref{eq:GLOW_training} using ADAM with a learning rate $\mu_w = 10^{-5}$ and a dual learning rate $\mu_\lambda = 10^{-3}$. The constraint parameter $\epsilon$ is set to $0.05$. As per \eqref{eq:constrainedUO}, we add a noise vector ${\bf n}_l \sim {\cal N}(0, \sigma^2_l{\bf I})$ to the output of each unrolled layer $l$ and the initial estimate ${\bf y}_0$ is set to be the masked image $\bf x$. 
The noise variance declines over the layers, that is, $\sigma^2_l \propto \frac{1}{l}$.
The training is executed with mini-batches of size $8$ and was held for $30$ epochs.\footnote{The code is available at: \href{https://github.com/SMRhadou/UnrolledGlow}{https://github.com/SMRhadou/UnrolledGlow}.} We compare the performance and robustness of our constrained GLOW-Prox to the standard version trained without constraints \cite{Wei22}.

\textbf{Performance.} We evaluate the distance, averaged over the test dataset, between the output of each unrolled layer and the clean images. The results are reported in Figure \ref{fig:performace_GLOW} (left), which shows that both constrained and unconstrained GLOW-Prox have a similar behavior in the image domain. This is also confirmed in Figure \ref{fig:examples_GLOW}, which displays three test examples. As shown in the figure, the networks gradually inpaint the masked area reflecting a descending nature in the layers' estimates. The two methods, however, depart from each other in the latent space, as shown in Figure \ref{fig:performace_GLOW} (middle). The $\ell_2$-norm of the latent variable decreases gradually when the network is trained using constraints, implying that the gradient constraints were satisfied (and generalized) in the test examples. This is not the case when we omit the constraints during training, as depicted by the red line that fluctuates over the layers. Combined together, \cref{fig:performace_GLOW} (right) illustrates that the objective function $f_{in}$ indeed decreases over the layers in the case of constrained GLOW-prox. 

\begin{figure*}[t]
    \centering
    \begin{subfigure}[h]{0.39\textwidth}
        \includegraphics[width=\textwidth]{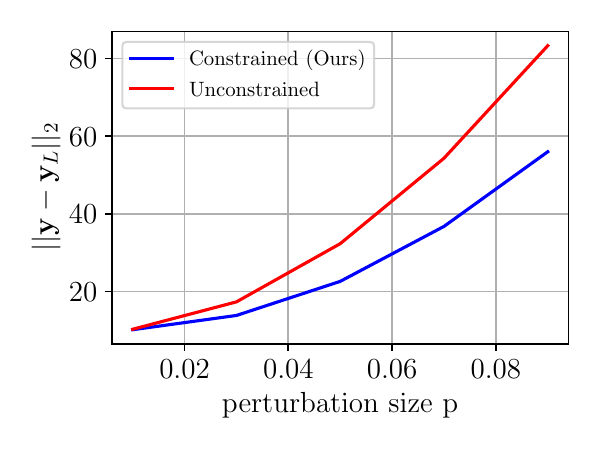}
    \end{subfigure}
    \begin{subfigure}[h]{0.58\textwidth}
        \includegraphics[width=0.97\textwidth]{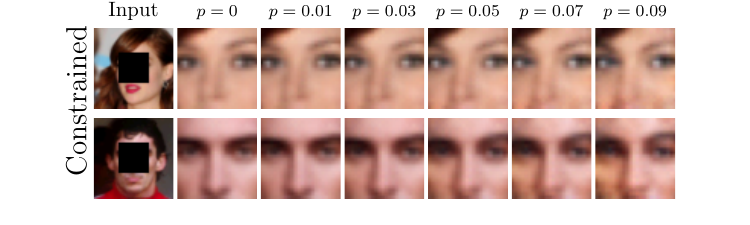}
        \includegraphics[width=0.97\textwidth]{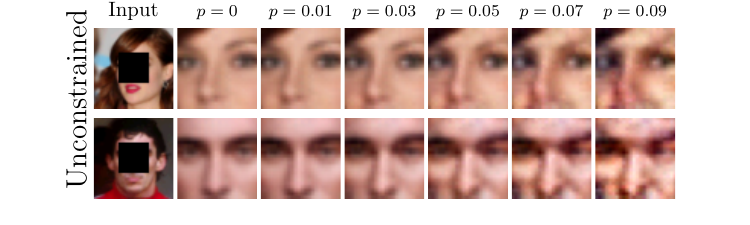}
    \end{subfigure}
    \caption{OOD Robustness. (Left) Mean square error between the original and reconstructed (inpainted) images under different perturbation sizes $p$. (Right) Examples of inpainted images using (top) constrained GLOW-Prox and (bottom) unconstrained GLOW-Prox for different $p$. Deterioration in the quality of the reconstructed images occurs faster in the unconstrained case.}
    \label{fig:robustness_GLOW}
\end{figure*}

\textbf{Robustness.} Constrained GLOW-Prox also shows more OOD robustness to input perturbations, as illustrated in \cref{fig:robustness_GLOW}. The left plot in the figure shows the MSE between the original and reconstructed images after adding perturbations to the input image. The perturbation is a random matrix ${\bf N}$, where each element in the matrix is sampled iid  sampled from a normal distribution ${\cal N}(0, p^2)$ and $p$ represents the perturbation size. As can be inferred from \cref{fig:robustness_GLOW} (left), the performance of the two models deteriorates with the perturbation size; however, GLOW-Prox deteriorates faster and more significantly than the constrained model. The distinction between the two models is more perceptible in \cref{fig:robustness_GLOW} (right), where the distortion in the shape of the facial features is more pronounced in the case of standard GLOW-Prox. In the case of constrained GLOW-Prox, the reconstructed images, albeit slightly different from the original, are still reasonably acceptable even under severe perturbations sizes.

\section{Conclusions}
 
In this paper, we introduced a framework for posing unrolled networks as stochastic descent algorithms. Within this framework, the unrolled layers are trained with descending constraints to ensure that their trajectory of estimates descends toward a stationary point of the optimizee problem. Our theoretical analysis shows that this trajectory is guaranteed to converge with an exponential rate of convergence. 
These substantiated convergence guarantees confer an advantage upon unrolled networks as we analytically show that the learnt descending behavior is transferable under distribution shifts in the optimization problem being solved. 
These findings collectively underscore the potential of the proposed framework to contribute to the development of robust, interpretable unrolled networks for a wider array of applications. One potential area for future work is the expansion of our framework to enforce the feasibility of the solutions in the context of constrained optimizee problems, which encompass many engineering problems.



\begin{appendices}
\section{Proof of Theorem \ref{thm:convergence}} \label{app:thm1}

We describe the notation we use in our analysis followed by the proof of Theorem \ref{thm:convergence}.
Consider a probability space $(\Omega, {\cal F}, P)$, where $\Omega$ is a sample space, ${\cal F}$ is a sigma algebra, and $P:{\cal F} \rightarrow [0,1]$ is a probability measure. We define a random variable $X: \Omega \rightarrow \mathbb{R}$ and write $P(\{ \omega: X(\omega) = 0\})$ as $P(X=0)$ to keep equations concise. We also define a filtration of $\cal F$ as $\{{\cal F}_l\}_{l>0}$, which can be thought of as an increasing sequence of $\sigma$-algebras with ${\cal F}_{l-1} \subset {\cal F}_l$. We assume that the outputs of the unrolled layers ${\bf y}_l$ are adapted to ${\cal F}_l$, i.e., ${\bf y}_l \in {\cal F}_l$. 

Moreover, a stochastic process $X_k$ is said to form a supermartingale if $\mathbb{E}[X_k | X_{k-1}, \dots, X_0] \leq X_{k-1}$. This inequality implies that given the past history of the process, the future value $X_k$ is not, on average, larger than the latest one. In the following, we provide a proof to Theorem \ref{thm:convergence}, which uses a supermartingale argument that is commonly used to prove convergence of stochastic descent algorithms.

\begin{proof}
Let $A_l \in {\cal F}_l$ be the event that the constraint \eqref{eq:constraints} at layer $l$ is satisfied. By the total expectation theorem, we have 
\begin{equation}\label{eq:totalexp}
\begin{split}
    \mathbb{E} & \big[ {\|{\nabla}  f({\bf y}_{l})\|}  \big] \\
    & = 
    P(A_l)\mathbb{E} \big[ {\|{\nabla}  f({\bf y}_{l})\|}  \ | A_l \big] 
    + P(A_l^c) \mathbb{E} \big[ {\|{\nabla}  f({\bf y}_{l})\|} \ | A_l^c \big]
\end{split}
\end{equation}
with $P(A_l)=1-\delta$. Note that we write $f({\bf y}_{l}; {\bf x})$ as $f({\bf y}_{l})$ for conciseness.
The first term on the right-hand side is the conditional expectation conditioned on the constraint being met and, in turn, is bounded above according to \eqref{eq:constraints}. The second term represents the complementary event $A_l^c \in {\cal F}_l$. The conditional expectation in this case can also be bounded since the (conditional) expectation of a random variable cannot exceed its maximum value, i.e., $\mathbb{E}\| {\nabla}  f({\bf y}_{l})\| \leq \max_{{\bf y}_l}\| {\nabla}  f({\bf y}_{l})\|$, and the latter is bounded by $M$ according to Assumption \ref{A1}. Consequently,  \eqref{eq:totalexp} is upper bounded by
\begin{equation}\label{eq:bound1}
\begin{split}
    & \mathbb{E} \big[ {\|{\nabla}  f({\bf y}_{l})\|}  \big] \\
    & \leq (1-\delta)(1-{\epsilon}) \ \mathbb{E} {\|  {\nabla}  f({\bf y}_{l-1})\|} + (1-\delta)\zeta(N, \delta) + \delta M,
\end{split}
\end{equation}
almost surely.

We let ${Z_l} = {\mathbb{E}_{D_x} \|  {\nabla}  f({\bf y}_{l})\|}$ be the gradient norm averaged over the input data distribution $D_x$, which is a random variable depending on the value of ${\bf y}_{l-1}$ and ${\bf n}_l$. We then construct a supermartingale inequality from \eqref{eq:bound1}:
\begin{equation}\label{eq:bound2}
\begin{split}
    & \mathbb{E}_{{\cal N}_l} [Z_l | \ {\cal F}_{l-1} ] 
     \leq (1-\delta)(1-{\epsilon}) \  Z_{l-1} + (1-\delta)\zeta(N, \delta) + \delta M\\
    & = (1-\delta) \  Z_{l-1} - (1-\delta) \Big( \epsilon Z_{l-1} - \zeta(N, \delta) - \frac{\delta M}{1-\delta} \Big).
\end{split}
\end{equation}
The conditional expectation on the left-hand side is taken over the distribution of the noise ${\bf n}_l$.
The goal of the rest of the proof is to show that i) when $l$ grows, $Z_l$ almost surely and infinitely often achieves values below $\eta := \frac{1}{\epsilon} \big(\zeta(N, \delta) + \delta M/1-\delta\big)$, and that ii) this also holds for the gradient norm $\|  {\nabla}  f({\bf y}_{l})\|$ itself. 

To achieve the first objective, it suffices to show that the lowest gradient norm achieved, on average, up to layer $l$ converges to a region of size $\eta$, i.e.,
\begin{equation}\label{eq:goal1}
    \lim_{l \rightarrow \infty}  \min_{k\leq l} \{Z_k \} \leq \eta \quad a.s.
\end{equation}
To prove the above inequality, we define the sequences
\begin{equation}\label{eq:sequences}
    \begin{split}
        \alpha_l & :=  Z_l \cdot \mathbf{1}\{  Z_l^\text{best} > \eta \},\\
        \beta_l & := \Big( \epsilon Z_{l} - \zeta(N, \delta) - \frac{\delta M}{1-\delta} \Big) \mathbf{1}\{  Z_l^\text{best} > \eta \},
    \end{split}
\end{equation}
where $Z_l^\text{best} = \min_{k\leq l} \{Z_k \}$ and $\mathbf{1}\{.\}$ is an indicator function.
The first sequence $\alpha_l$ 
keeps the values of $Z_l$ up until the best value $Z_l^\text{best}$ drops below $\eta$ for the first time. After this point, the best value stays below the threshold $\eta$ since $Z_{l+1}^\text{best} \leq Z_l^\text{best}$ by definition. This ensures that the indication function stays at zero (i.e., $\alpha_l = 0$) for the rest of the sequence. 
 Similarly, the sequence $\beta_l$ follows the values of $\epsilon Z_{l} - \zeta(N, \delta) - \frac{\delta}{1-\delta} M$ until it falls below zero for the first time. 

We now aim to show that  $\alpha_l$ forms a supermartingale, which requires finding an upper bound of the conditional expectation $\mathbb{E}[\alpha_l | {\cal F}_{l-1}]$. We use the total expectation theorem to write
\begin{equation} \label{eq:SM}
\begin{split}
    \mathbb{E}[\alpha_l | {\cal F}_{l-1}] & = \mathbb{E}[\alpha_l | {\cal F}_{l-1}, \alpha_{l-1}=0] P(\alpha_{l-1}=0) \\
    & + \mathbb{E}[\alpha_l | {\cal F}_{l-1}, \alpha_{l-1} \neq 0] P(\alpha_{l-1}\neq 0),
\end{split}
\end{equation}
splitting the expectation into two cases: $\alpha_{l-1}=0$ and $\alpha_{l-1} \neq 0$.
When $\alpha_{l-1}=0$, \eqref{eq:sequences} implies that either $Z_{l}$ or the indicator function is zero (i.e., $Z_l^\text{best} \leq \eta$). However, $Z_l$ cannot be zero without $Z_l^\text{best} \leq \eta$, and, therefore, $\alpha_{l-1}=0$ always implies that the indicator function is zero and $Z_l^\text{best} \leq \eta$. It also follows that $\beta_{l-1}$ is zero when $\alpha_{l-1}=0$ since it employs the same indicator function. As we discussed earlier, once $\alpha_{l-1} = 0$, all the values that follow is also zero, i.e., $\alpha_{k}=0, \ \forall k\geq l-1$. Hence, the conditional expectation of $\alpha_l$ can be written as
\begin{equation}\label{eq:SM_part1}
    \mathbb{E}[\alpha_l | {\cal F}_{l-1}, \alpha_{l-1}=0] = (1-\delta)(\alpha_{l-1} - \beta_{l-1}) = 0.
\end{equation}
On the other hand, when $\alpha_{l-1}\neq 0$, the conditional expectation follows from the definition in \eqref{eq:sequences},
\begin{equation} \label{eq:SM_part2}
    \begin{split}
        \mathbb{E}[\alpha_l | & {\cal F}_{l-1}, \alpha_{l-1} \neq 0] \\
        & = \mathbb{E}[Z_l \cdot \mathbf{1}\{  Z_l^\text{best} > \eta \} | {\cal F}_{l-1}, \alpha_{l-1} \neq 0]\\
        & \leq \mathbb{E}[Z_l | {\cal F}_{l-1}, \alpha_{l-1} \neq 0]\\
        & \leq (1-\delta) \  Z_{l-1} - (1-\delta) \Big( \epsilon Z_{l-1} - \zeta(N, \delta) - \frac{\delta M}{1-\delta} \Big)\\
        & = (1-\delta) (\alpha_{l-1} - \beta_{l-1}).
    \end{split}
\end{equation}
The first inequality is true since the indicator function is at most one and the second inequality is a direct application of \eqref{eq:bound2}. The last equality results from that fact that the indicator function $\mathbf{1}\{ Z_l^\text{best} > \eta \}$ is one since $\alpha_{l-1} \neq 0$, which implies that $\alpha_{l-1} = Z_{l-1}$ and $\beta_{l-1} = \epsilon Z_{l-1} - \zeta(N, \delta) - \frac{\delta}{1-\delta} M$. 

Combining the results in \eqref{eq:SM_part1} and \eqref{eq:SM_part2} and substituting in \eqref{eq:SM}, it finally follows that 
\begin{equation}\label{eq:final_SM}
\begin{split}
     \mathbb{E} [\alpha_l | & {\cal F}_{l-1}] \\
    & \leq (1-\delta) (\alpha_{l-1} - \beta_{l-1}) [P(\alpha_{l-1}= 0) + P(\alpha_{l-1}\neq 0)]\\
    & = (1-\delta) (\alpha_{l-1} - \beta_{l-1}).
\end{split}
\end{equation}
It is worth noting that, for all $l$, $\alpha_l \geq 0$ and $\beta_l \geq 0$ by construction.
It then follows from supermartingale convergence theorem \cite[Theorem 1]{robbins_convergence_1971} that \eqref{eq:final_SM} implies that (i) $\alpha_l$ converges almost surely, and (ii) $\sum_{l=1}^\infty \beta_l$ is almost surely summable (i.e., finite). When the latter is written explicitly, we get
\begin{equation}\label{eq:finiteSum}
    \sum_{l=1}^\infty \Big( \epsilon Z_{l} - \zeta(N, \delta) - \frac{\delta M}{1-\delta} \Big) \mathbf{1}\{  Z_l^\text{best} > \eta \} < \infty, \quad a.s.,
\end{equation}
The almost sure convergence of the above sequence implies that the limit inferior and limit superior coincide and 
\begin{equation}\label{eq:liminfTotal}
    \liminf_{l \rightarrow \infty} \Big( \epsilon Z_{l} - \zeta(N, \delta) - \frac{\delta M}{1-\delta} \Big) \mathbf{1}\{  Z_l^\text{best} > \eta \} = 0, \quad a.s.
\end{equation}
Equation \eqref{eq:liminfTotal} is true if either there exist a sufficiently large $l$ such that $Z_l^\text{best} \leq \eta = \frac{1}{\epsilon} \big(\zeta(N, \delta) + \delta M/1-\delta\big)$ to set the indicator to zero or it holds that
\begin{equation}\label{eq:liminf}
     \liminf_{l \rightarrow \infty}  \Big( \epsilon Z_{l} - \zeta(N, \delta) - \frac{\delta M}{1-\delta} \Big) = 0, \quad a.s.
\end{equation}
which is equivalent to having $\sup_{l} \inf_{m\geq l}  Z_{m} = \frac{1}{\epsilon} \big(\zeta(N, \delta) + \frac{\delta M}{1-\delta} \big)$. Hence, there exists some large $l$ where $Z_l^\text{best} \leq \sup_{l} \inf_{m\geq l}  Z_{m}$, which leads to the same upper bound. This proves the correctness of \eqref{eq:goal1}.

To this end, we have shown the convergence of $Z_l^\text{best}$, which was defined as the best \textit{expected} value of $\| {\nabla}  f({\bf y}_{l})\|$. Now, we turn to show the convergence of the random variable $\| {\nabla}  f({\bf y}_{l})\|$ itself. We use that the fact that $Z_l = \int \| {\nabla}  f({\bf y}_{l})\| dP$ to re-write \eqref{eq:liminf} as
\begin{equation}
    \liminf_{l \rightarrow \infty}  \int \epsilon \| {\nabla}  f({\bf y}_{l})\|  dP = \zeta(N, \delta) + \frac{\delta M}{1-\delta}, \quad a.s.
\end{equation}
By Fatou's lemma \cite[Theorem 1.5.5]{durrett2019probability}, it follows that 
\begin{equation}
\begin{split}
    \int \liminf_{l \rightarrow \infty} \epsilon \| {\nabla}  f({\bf y}_{l})\| dP & \leq \liminf_{l \rightarrow \infty}  \int \epsilon \| {\nabla}  f({\bf y}_{l})\| dP \\
    & = \zeta(N, \delta) + \frac{\delta M}{1-\delta}.
\end{split}
\end{equation}
We can bound the left-hand side from below by defining $f^\text{best}_l := \min_{k\leq l} \| {\nabla}  f({\bf y}_{k})\|$ as the lowest gradient norm achieved up to layer $l$. By definition, $f^\text{best}_l \leq \liminf_{l \rightarrow \infty} \| {\nabla}  f({\bf y}_{l})\|$ for sufficiently large $l$. Therefore, we get
\begin{equation}
\begin{split}
    \epsilon \int  f^\text{best}_l dP & \leq \epsilon \int \liminf_{l \rightarrow \infty}  \| {\nabla}  f({\bf y}_{l})\|  dP \\
    & \leq  \zeta(N, \delta) + \frac{\delta M}{1-\delta}, \quad a.s.
\end{split}
\end{equation}
for some large $l$. Equivalently, we can write that
\begin{equation}\label{eq:res1}
        \lim_{l \rightarrow \infty} \int  f^\text{best}_l dP \leq \frac{1}{\epsilon} \left( \zeta(N, \delta) + \frac{\delta M}{1-\delta} \right), \quad a.s.
\end{equation}
which completes the proof.
\end{proof}

\section{Proof of Lemma \ref{thm:rate}} \label{app:rate}
Similar to the proof in Appendix \ref{app:thm1}, we write $f({\bf y}_{l}; {\bf x})$ as $f({\bf y}_{l})$ for conciseness.
\begin{proof}
First, we recursively unroll the right-hand side of \eqref{eq:bound1} to evaluate the reduction in the gradient norm $\mathbb{E} {\|{\nabla} f({\bf y}_{l})\|}$ after $l$ layers. The gradient norm at the $l$-th layer then satisfies 
\begin{equation}\label{eq:bound_recursive}
\begin{split}
    \mathbb{E} \big[ \|{\nabla} & f({\bf y}_{l})\| \big] 
    \leq \ \ (1-\delta)^l(1-{\epsilon})^l \ \mathbb{E} {\|  {\nabla} f({\bf y}_{0})\|}\\
    & + \sum_{i=0}^{l-1} (1-\delta)^{i-1}(1-{\epsilon})^{i-1} \Big[ (1-\delta)\zeta(N, \delta) + \delta M \Big].
\end{split}
\end{equation}
The summation on the right-hand side resembles a geometric sum, which can be simplified to
\begin{equation}\label{eq:bound_recursive2}
\begin{split}
    \mathbb{E} \big[ \|{\nabla} f({\bf y}_{l}) & \| \big] 
    \leq \ \ (1-\delta)^l(1-{\epsilon})^l \ \mathbb{E} {\|  {\nabla} f({\bf y}_{0})\|}\\
    & +\frac{1 - (1-\delta)^l(1-{\epsilon})^l}{1-(1-\delta)(1-{\epsilon})} \Big[ (1-\delta)\zeta(N, \delta) + \delta M \Big].
\end{split}
\end{equation}

Second, we aim to evaluate the distance between the gradient norm at the $L$-th layer and its optimal value,
\begin{equation}\label{eq:convergence}
\begin{split}
    \Big| \mathbb{E} \big[ {\|{\nabla} f({\bf y}_{L})\|}  \big] - & \mathbb{E} \big[ {\|{\nabla} f({\bf y}^*)\|} \big] \Big| \\
    =  \lim_{l \to \infty} \Big| & \mathbb{E} \big[ {\|{\nabla} f({\bf y}_{L})\|} \big] 
    - \mathbb{E} [ \min_{k\leq l} \|{\nabla} f({\bf y}_{k})\| ] \Big. \\
    & \Big.+ \mathbb{E} [ \min_{k\leq l} \|{\nabla} f({\bf y}_{k})\| ]
    -  \mathbb{E} \big[ {\|{\nabla} f({\bf y}^*)\|} \big] \Big|.
\end{split}
\end{equation}
In \eqref{eq:convergence}, we add and subtract $\lim_{l \to \infty} \mathbb{E} [ \min_{k\leq l} \|{\nabla} f({\bf y}_{k})\| ]$ in the right-hand side while imposing the limit when $l$ goes to infinity. Using triangle inequality, we re-write \eqref{eq:convergence} as
\begin{equation}\label{eq:convergence2}
\begin{split}
    \Big| \mathbb{E} \big[ \|{\nabla} f({\bf y}_{L})\| \big] & - \mathbb{E} \big[ {\|{\nabla} f({\bf y}^*)  \|} \big]\Big| \\
    & \leq  \lim_{l \to \infty} \Big| \mathbb{E} \big[ {\|{\nabla} f({\bf y}_{L})\|} \big] 
    -  \mathbb{E} [ \min_{k\leq l} \|{\nabla} f({\bf y}_{k})\| ] \Big| \\
    & \quad + \lim_{l \to \infty}  \Big|  \mathbb{E} [ \min_{k\leq l} \|{\nabla} f({\bf y}_{k})\| ]
    -  \mathbb{E} \big[ {\|{\nabla} f({\bf y}^*)\|} \big] \Big|.
\end{split}
\end{equation}
Note that the gradient of $f$ at the stationary point ${\bf y}^*$ is zero. Therefore, the second term on the right-hand side is upper bounded according to Theorem \ref{thm:convergence}. 

The final step required to prove Lemma \ref{thm:rate} is to evaluate the first term in \eqref{eq:convergence2}. To do so, we observe that 
\begin{equation}\label{eq:convergence3}
\begin{split}
    \lim_{l \to \infty} \Big| \mathbb{E} \big[ \|{\nabla} f({\bf y}_{L})\| \big] &
    -  \mathbb{E} [ \min_{k\leq l} \|{\nabla} f({\bf y}_{k})\| ] \Big| = \\
     & \mathbb{E} \big[ {\|{\nabla} f({\bf y}_{L})\|} \big] 
    - \lim_{l \to \infty} \mathbb{E} [ \min_{k\leq l} \|{\nabla} f({\bf y}_{k})\| ].
\end{split}
\end{equation}
This is the case since $ \mathbb{E} \big[ {\|{\nabla} f({\bf y}_{l})\|} \big] $ cannot go below the minimum of the gradient norm when $l$ goes to infinity. Next, we substitute \eqref{eq:bound_recursive2} in \eqref{eq:convergence3}
\begin{equation}\label{eq:convergence4}
    \begin{split}
         \Big| \mathbb{E} \big[ \|{\nabla} & f({\bf y}_{L})\| \big] 
    - \lim_{l \to \infty} \mathbb{E} [ \min_{k\leq l} \|{\nabla} f({\bf y}_{k})\| ] \Big| \\ & =
    (1-\delta)^L(1-{\epsilon})^L \ \mathbb{E} {\|  {\nabla} f({\bf y}_{0})\|} \\
    & +\frac{1 - (1-\delta)^L(1-{\epsilon})^L}{1-(1-\delta)(1-{\epsilon})} \Big[ (1-\delta)\zeta(N, \delta) + \delta M \Big] \\
    & - \lim_{l \to \infty} (1-\delta)^l(1-{\epsilon})^l \ \mathbb{E} {\|  {\nabla} f({\bf y}_{0})\|} \\
    & - \lim_{l \to \infty} \frac{1 - (1-\delta)^l(1-{\epsilon})^l}{1-(1-\delta)(1-{\epsilon})} \Big[ (1-\delta)\zeta(N, \delta) + \delta M \Big]
    .
    \end{split}
\end{equation}
The first limit in \eqref{eq:convergence4} goes to zero since $(1-\delta)(1-{\epsilon}) < 1$, and the second limit is evaluated as the constant $\frac{(1-\delta)\zeta(N, \delta) + \delta M}{1-(1-\delta)(1-{\epsilon})}$. Therefore, we get
\begin{equation}\label{eq:convergence5}
    \begin{split}
         \Big| \mathbb{E} \big[ \|{\nabla} f({\bf y}_{L})\| \big] &
    - \lim_{l \to \infty} \mathbb{E} [ \min_{k\leq l} \|{\nabla} f({\bf y}_{k})\| ] \Big| \\ 
     & = (1-\delta)^L(1-{\epsilon})^L \ \mathbb{E} {\|  {\nabla} f({\bf y}_{0})\|}\\
    & - \frac{(1-\delta)^L(1-{\epsilon})^L}{1-(1-\delta)(1-{\epsilon})} \Big[ (1-\delta)\zeta(N, \delta) + \delta M \Big] \\
    & \leq (1-\delta)^L(1-{\epsilon})^L \ \mathbb{E} {\|  {\nabla} f({\bf y}_{0})\|}.
    \end{split}
\end{equation}
The last inequality follows since the term $\frac{(1-\delta)^L(1-{\epsilon})^L}{1-(1-\delta)(1-{\epsilon})} [ (1-\delta)\zeta(N, \delta) + \delta M ]$ is nonnegative.
Combining the two results, i.e., \eqref{eq:convergence2} and \eqref{eq:convergence5}, we can get the upper bound
\begin{equation}\label{eq:convergence_final}
\begin{split}
    \Big| \mathbb{E} \big[ \|{\nabla} f({\bf y}_{L})\| & \big] - \mathbb{E} \big[ \|{\nabla} f({\bf y}^*)  \|  \big]\Big| \\
     & \leq (1-\delta)^L (1-{\epsilon})^L \ \mathbb{E} {\|  {\nabla} f({\bf y}_{0})\|} \\ 
     & \quad+ \frac{1}{\epsilon} \left( \zeta(N, \delta) + \frac{\delta M}{1-\delta} \right),
\end{split}
\end{equation}
which completes the proof.
\end{proof}

\section{Proof of Corollary \ref{cor:OOD}} \label{app:OOD}
\begin{proof}
We start by adding and subtracting $\mathbb{E}_{D_x} \big[ \|  {\nabla} f({\bf y}_{l};{\bf x}) \|_2 \big]$ and $(1-\epsilon) \Big[ \mathbb{E}_{D_x} \big[ \| {\nabla} f({\bf y}_{l-1};{\bf x})\|_2]$ from the quantity we seek to evaluate, i.e., we get
\begin{equation}
    \begin{split}
        &\ \mathbb{E}_{D_x'} \big[ \| {\nabla} f({\bf y}_{l};{\bf x})\|_2] - (1-\epsilon) \ \mathbb{E}_{D_x'} \big[ \|  {\nabla} f({\bf y}_{l-1};{\bf x}) \|_2 \big] \\
        & = \mathbb{E}_{D_x'} \big[ \| {\nabla} f({\bf y}_{l};{\bf x})\|_2] -  \ \mathbb{E}_{D_x} \big[ \|  {\nabla} f({\bf y}_{l};{\bf x}) \|_2 \big] \\
        & + (1-\epsilon) \Big[ \mathbb{E}_{D_x} \big[ \| {\nabla} f({\bf y}_{l-1};{\bf x})\|_2] -  \ \mathbb{E}_{D_x'} \big[ \|  {\nabla} f({\bf y}_{l-1};{\bf x}) \|_2 \big] \Big] \\
        & + \mathbb{E}_{D_x} \big[ \| {\nabla} f({\bf y}_{l};{\bf x})\|_2] - (1-\epsilon) \ \mathbb{E}_{D_x} \big[ \|  {\nabla} f({\bf y}_{l-1};{\bf x}) \|_2 \big].
    \end{split}
\end{equation}

The right-hand side consists of three terms that can be bounded above with positive quantities according to Assumption \ref{A6} and \eqref{eq:constraints}. Therefore, the 

\begin{equation}
    \begin{split}
        \mathbb{E}_{D_x'} \big[ & \| {\nabla} f({\bf y}_{l};{\bf x})\|_2] - (1-\epsilon) \ \mathbb{E}_{D_x'} \big[ \|  {\nabla} f({\bf y}_{l-1};{\bf x}) \|_2 \big] \\
        & \leq   M d(D_x, D_x') + M (1-\epsilon) d(D_x, D_x') + \zeta(N, \delta) \\
        & \leq  2M d(D_x, D_x') + \zeta(N, \delta).
    \end{split}
\end{equation}
Notice that this inequality holds with probability $1-\delta$ since the upper bound in \eqref{eq:constraints} also holds with  the same probability.
This completes the proof.
    
\end{proof}

\end{appendices}

\bibliographystyle{ieeetr}
\bibliography{Bib}

\begin{thebibliography}{10}

\bibitem{gregor_learning_2010}
K.~Gregor and Y.~LeCun, ``Learning fast approximations of sparse coding,'' in
  {\em Proceedings of the 27th {International} {Conference} on {International}
  {Conference} on {Machine} {Learning}}, {ICML}'10, pp.~399--406, June 2010.

\bibitem{yang2022transformers}
Y.~Yang, D.~P. Wipf, {\em et~al.}, ``Transformers from an optimization
  perspective,'' {\em Advances in Neural Information Processing Systems},
  vol.~35, pp.~36958--36971, 2022.

\bibitem{yu2023white}
Y.~Yu, S.~Buchanan, D.~Pai, T.~Chu, Z.~Wu, S.~Tong, B.~D. Haeffele, and Y.~Ma,
  ``White-box transformers via sparse rate reduction,'' {\em arXiv preprint
  arXiv:2306.01129}, 2023.

\bibitem{von2023transformers}
J.~Von~Oswald, E.~Niklasson, E.~Randazzo, J.~Sacramento, A.~Mordvintsev,
  A.~Zhmoginov, and M.~Vladymyrov, ``Transformers learn in-context by gradient
  descent,'' in {\em International Conference on Machine Learning},
  pp.~35151--35174, PMLR, 2023.

\bibitem{monga_algorithm_2021}
V.~Monga, Y.~Li, and Y.~C. Eldar, ``Algorithm unrolling: Interpretable,
  efficient deep learning for signal and image processing,'' {\em IEEE Signal
  Processing Magazine}, vol.~38, pp.~18--44, Mar. 2021.

\bibitem{zhang2020deep}
K.~Zhang, L.~V. Gool, and R.~Timofte, ``Deep unfolding network for image
  super-resolution,'' in {\em Proceedings of the IEEE/CVF conference on
  computer vision and pattern recognition}, pp.~3217--3226, 2020.

\bibitem{Wei22}
X.~Wei, H.~van Gorp, L.~Gonzalez-Carabarin, D.~Freedman, Y.~C. Eldar, and
  R.~J.~G. van Sloun, ``Deep unfolding with normalizing flow priors for inverse
  problems,'' {\em IEEE Transactions on Signal Processing}, vol.~70,
  pp.~2962--2971, 2022.

\bibitem{mou2022deep}
C.~Mou, Q.~Wang, and J.~Zhang, ``Deep generalized unfolding networks for image
  restoration,'' in {\em Proceedings of the IEEE/CVF Conference on Computer
  Vision and Pattern Recognition}, pp.~17399--17410, 2022.

\bibitem{Li20}
Y.~Li, M.~Tofighi, J.~Geng, V.~Monga, and Y.~C. Eldar, ``Efficient and
  interpretable deep blind image deblurring via algorithm unrolling,'' {\em
  IEEE Transactions on Computational Imaging}, vol.~6, pp.~666--681, 2020.

\bibitem{qiao2023towards}
P.~Qiao, S.~Liu, T.~Sun, K.~Yang, and Y.~Dou, ``Towards vision transformer
  unrolling fixed-point algorithm: a case study on image restoration,'' {\em
  arXiv preprint arXiv:2301.12332}, 2023.

\bibitem{hu2020iterative}
Q.~Hu, Y.~Cai, Q.~Shi, K.~Xu, G.~Yu, and Z.~Ding, ``Iterative algorithm induced
  deep-unfolding neural networks: Precoding design for multiuser {MIMO}
  systems,'' {\em IEEE Transactions on Wireless Communications}, vol.~20,
  no.~2, pp.~1394--1410, 2020.

\bibitem{chowdhury2021unfolding}
A.~Chowdhury, G.~Verma, C.~Rao, A.~Swami, and S.~Segarra, ``Unfolding {WMMSE}
  using graph neural networks for efficient power allocation,'' {\em IEEE
  Transactions on Wireless Communications}, vol.~20, no.~9, pp.~6004--6017,
  2021.

\bibitem{liu2021deep}
Y.~Liu, Q.~Hu, Y.~Cai, G.~Yu, and G.~Y. Li, ``Deep-unfolding beamforming for
  intelligent reflecting surface assisted full-duplex systems,'' {\em IEEE
  Transactions on Wireless Communications}, vol.~21, no.~7, pp.~4784--4800,
  2021.

\bibitem{Schynol23}
L.~Schynol and M.~Pesavento, ``Coordinated sum-rate maximization in multicell
  {MU-MIMO} with deep unrolling,'' {\em IEEE Journal on Selected Areas in
  Communications}, vol.~41, no.~4, pp.~1120--1134, 2023.

\bibitem{huang2023regularization}
H.~Huang, Y.~Lin, G.~Gui, H.~Gacanin, H.~Sari, and F.~Adachi, ``Regularization
  strategy aided robust unsupervised learning for wireless resource
  allocation,'' {\em IEEE Transactions on Vehicular Technology}, 2023.

\bibitem{yang2023knowledge}
H.~Yang, N.~Cheng, R.~Sun, W.~Quan, R.~Chai, K.~Aldubaikhy, A.~Alqasir, and
  X.~Shen, ``Knowledge-driven resource allocation for {D2D} networks: A {WMMSE}
  unrolled graph neural network approach,'' {\em arXiv preprint
  arXiv:2307.05882}, 2023.

\bibitem{li2021deep}
Y.~Li, O.~Bar-Shira, V.~Monga, and Y.~C. Eldar, ``Deep algorithm unrolling for
  biomedical imaging,'' {\em arXiv preprint arXiv:2108.06637}, 2021.

\bibitem{nakarmi2020multi}
U.~Nakarmi, J.~Y. Cheng, E.~P. Rios, M.~Mardani, J.~M. Pauly, L.~Ying, and
  S.~S. Vasanawala, ``Multi-scale unrolled deep learning framework for
  accelerated magnetic resonance imaging,'' in {\em 2020 IEEE 17th
  International Symposium on Biomedical Imaging (ISBI)}, pp.~1056--1059, IEEE,
  2020.

\bibitem{chennakeshava2022deep}
N.~Chennakeshava, T.~S. Stevens, F.~J. de~Bruijn, A.~Hancock, M.~Peka{\v{r}},
  Y.~C. Eldar, M.~Mischi, and R.~J. van Sloun, ``Deep proximal unfolding for
  image recovery from under-sampled channel data in intravascular ultrasound,''
  in {\em ICASSP 2022-2022 IEEE International Conference on Acoustics, Speech
  and Signal Processing (ICASSP)}, pp.~1221--1225, IEEE, 2022.

\bibitem{wang2023indudonet+}
H.~Wang, Y.~Li, H.~Zhang, D.~Meng, and Y.~Zheng, ``{InDuDoNet+}: A deep
  unfolding dual domain network for metal artifact reduction in ct images,''
  {\em Medical Image Analysis}, vol.~85, p.~102729, 2023.

\bibitem{hadou2023stochastic}
S.~Hadou, N.~NaderiAlizadeh, and A.~Ribeiro, ``Stochastic unrolled federated
  learning,'' {\em arXiv preprint arXiv:2305.15371}, 2023.

\bibitem{Ravi2016OptimizationAA}
S.~Ravi and H.~Larochelle, ``Optimization as a model for few-shot learning,''
  in {\em International Conference on Learning Representations}, 2016.

\bibitem{hershey2014deep}
J.~R. Hershey, J.~L. Roux, and F.~Weninger, ``Deep unfolding: Model-based
  inspiration of novel deep architectures,'' {\em arXiv preprint
  arXiv:1409.2574}, 2014.

\bibitem{nasser2022deep}
R.~Nasser, Y.~C. Eldar, and R.~Sharan, ``Deep unfolding for non-negative matrix
  factorization with application to mutational signature analysis,'' {\em
  Journal of Computational Biology}, vol.~29, no.~1, pp.~45--55, 2022.

\bibitem{noah2023limited}
Y.~Noah and N.~Shlezinger, ``Limited communications distributed optimization
  via deep unfolded distributed {ADMM},'' {\em arXiv preprint
  arXiv:2309.14353}, 2023.

\bibitem{Liu23}
C.~Liu, G.~Leus, and E.~Isufi, ``Unrolling of simplicial elasticnet for edge
  flow signal reconstruction,'' {\em IEEE Open Journal of Signal Processing},
  pp.~1--9, 2023.

\bibitem{Heaton_Chen_Wang_Yin_2023}
H.~Heaton, X.~Chen, Z.~Wang, and W.~Yin, ``Safeguarded learned convex
  optimization,'' {\em Proceedings of the AAAI Conference on Artificial
  Intelligence}, vol.~37, pp.~7848--7855, Jun. 2023.

\bibitem{shen2021learning}
J.~Shen, X.~Chen, H.~Heaton, T.~Chen, J.~Liu, W.~Yin, and Z.~Wang, ``Learning a
  minimax optimizer: A pilot study,'' in {\em International Conference on
  Learning Representations}, 2021.

\bibitem{Moeller_2019_ICCV}
M.~Moeller, T.~Mollenhoff, and D.~Cremers, ``Controlling neural networks via
  energy dissipation,'' in {\em Proceedings of the IEEE/CVF International
  Conference on Computer Vision (ICCV)}, October 2019.

\bibitem{liu2021investigating}
R.~Liu, P.~Mu, and J.~Zhang, ``Investigating customization strategies and
  convergence behaviors of task-specific {ADMM},'' {\em IEEE Transactions on
  Image Processing}, vol.~30, pp.~8278--8292, 2021.

\bibitem{Ito19}
D.~Ito, S.~Takabe, and T.~Wadayama, ``Trainable {ISTA} for sparse signal
  recovery,'' {\em IEEE Transactions on Signal Processing}, vol.~67, no.~12,
  pp.~3113--3125, 2019.

\bibitem{Sabulal20}
A.~P. Sabulal and S.~Bhashyam, ``Joint sparse recovery using deep unfolding
  with application to massive random access,'' in {\em ICASSP 2020 - 2020 IEEE
  International Conference on Acoustics, Speech and Signal Processing
  (ICASSP)}, pp.~5050--5054, 2020.

\bibitem{Chen18theoretical}
X.~Chen, J.~Liu, Z.~Wang, and W.~Yin, ``Theoretical linear convergence of
  unfolded ista and its practical weights and thresholds,'' in {\em Advances in
  Neural Information Processing Systems}, vol.~31, 2018.

\bibitem{liu2018alista}
J.~Liu, X.~Chen, Z.~Wang, and W.~Yin, ``{ALISTA}: Analytic weights are as good
  as learned weights in {LISTA},'' in {\em International Conference on Learning
  Representations}, 2019.

\bibitem{Xie19DL-ADMM}
X.~Xie, J.~Wu, G.~Liu, Z.~Zhong, and Z.~Lin, ``Differentiable linearized
  {ADMM},'' in {\em Proceedings of the 36th International Conference on Machine
  Learning}, vol.~97 of {\em Proceedings of Machine Learning Research},
  pp.~6902--6911, PMLR, 09--15 Jun 2019.

\bibitem{Abadi15}
M.~Abadi, A.~Chu, I.~Goodfellow, H.~B. McMahan, I.~Mironov, K.~Talwar, and
  L.~Zhang, ``Deep learning with differential privacy,'' in {\em Proceedings of
  the 2016 ACM SIGSAC Conference on Computer and Communications Security}, CCS
  '16, p.~308–318, 2016.

\bibitem{hadou2023space}
S.~Hadou, C.~I. Kanatsoulis, and A.~Ribeiro, ``Space-time graph neural networks
  with stochastic graph perturbations,'' in {\em IEEE International Conference
  on Acoustics, Speech and Signal Processing (ICASSP)}, pp.~1--5, 2023.

\bibitem{hadouspace}
S.~Hadou, C.~I. Kanatsoulis, and A.~Ribeiro, ``Space-time graph neural
  networks,'' in {\em International Conference on Learning Representations
  (ICLR)}, 2022.

\bibitem{gama20}
F.~Gama, J.~Bruna, and A.~Ribeiro, ``Stability properties of graph neural
  networks,'' {\em IEEE Transactions on Signal Processing}, vol.~68,
  pp.~5680--5695, 2020.

\bibitem{Andrychowicz16}
M.~Andrychowicz, M.~Denil, S.~G. Colmenarejo, M.~W. Hoffman, D.~Pfau,
  T.~Schaul, B.~Shillingford, and N.~de~Freitas, ``Learning to learn by
  gradient descent by gradient descent,'' in {\em Proceedings of the 30th
  International Conference on Neural Information Processing Systems},
  p.~3988–3996, 2016.

\bibitem{liu2022optimization}
R.~Liu, X.~Liu, S.~Zeng, J.~Zhang, and Y.~Zhang, ``Optimization-derived
  learning with essential convergence analysis of training and
  hyper-training,'' in {\em International Conference on Machine Learning},
  pp.~13825--13856, PMLR, 2022.

\bibitem{chamon2022constrained}
L.~F. Chamon, S.~Paternain, M.~Calvo-Fullana, and A.~Ribeiro, ``Constrained
  learning with non-convex losses,'' {\em IEEE Transactions on Information
  Theory}, 2022.

\bibitem{boyd2004convex}
S.~P. Boyd and L.~Vandenberghe, {\em Convex optimization}.
\newblock Cambridge university press, 2004.

\bibitem{Eksin12}
C.~Eksin and A.~Ribeiro, ``Distributed network optimization with heuristic
  rational agents,'' {\em IEEE Transactions on Signal Processing}, vol.~60,
  no.~10, pp.~5396--5411, 2012.

\bibitem{ISTA2004}
I.~Daubechies, M.~Defrise, and C.~De~Mol, ``An iterative thresholding algorithm
  for linear inverse problems with a sparsity constraint,'' {\em Communications
  on Pure and Applied Mathematics}, vol.~57, no.~11, pp.~1413--1457, 2004.

\bibitem{beck_fast_2009}
A.~Beck and M.~Teboulle, ``A fast iterative shrinkage-thresholding algorithm
  with application to wavelet-based image deblurring,'' in {\em 2009 {IEEE}
  {International} {Conference} on {Acoustics}, {Speech} and {Signal}
  {Processing}}, pp.~693--696, Apr. 2009.

\bibitem{chung16}
H.~Chung, S.~J. Lee, and J.~G. Park, ``Deep neural network using trainable
  activation functions,'' in {\em 2016 International Joint Conference on Neural
  Networks (IJCNN)}, pp.~348--352, 2016.

\bibitem{kiliccarslan2021rsigelu}
S.~Kili{\c{c}}arslan and M.~Celik, ``{RSigELU}: A nonlinear activation function
  for deep neural networks,'' {\em Expert Systems with Applications}, vol.~174,
  p.~114805, 2021.

\bibitem{varshney2021optimizing}
M.~Varshney and P.~Singh, ``Optimizing nonlinear activation function for
  convolutional neural networks,'' {\em Signal, Image and Video Processing},
  vol.~15, no.~6, pp.~1323--1330, 2021.

\bibitem{kingma2018glow}
D.~P. Kingma and P.~Dhariwal, ``{GLOW}: Generative flow with invertible 1x1
  convolutions,'' {\em Advances in neural information processing systems},
  vol.~31, 2018.

\bibitem{asim2020invertible}
M.~Asim, M.~Daniels, O.~Leong, A.~Ahmed, and P.~Hand, ``Invertible generative
  models for inverse problems: mitigating representation error and dataset
  bias,'' in {\em International Conference on Machine Learning}, pp.~399--409,
  PMLR, 2020.

\bibitem{whang2021solving}
J.~Whang, Q.~Lei, and A.~Dimakis, ``Solving inverse problems with a flow-based
  noise model,'' in {\em International Conference on Machine Learning},
  pp.~11146--11157, PMLR, 2021.

\bibitem{karras2017progressive}
T.~Karras, T.~Aila, S.~Laine, and J.~Lehtinen, ``Progressive growing of {GANs}
  for improved quality, stability, and variation,'' {\em arXiv preprint
  arXiv:1710.10196}, 2017.

\bibitem{robbins_convergence_1971}
H.~Robbins and D.~Siegmund, ``A convergence theorem for non negative almost
  supermartingales and some applications,'' in {\em Optimizing Methods in
  Statistics}, pp.~233--257, Academic Press, Jan. 1971.

\bibitem{durrett2019probability}
R.~Durrett, {\em Probability: theory and examples}, vol.~49.
\newblock Cambridge university press, 2019.

\end{thebibliography}

\end{document}